\documentclass[12pt]{colt2018}

\usepackage{microtype}
\usepackage{graphicx}
\usepackage{booktabs}

\usepackage{hyperref}

\usepackage{graphicx}
\usepackage[utf8]{inputenc} 
\usepackage[T1]{fontenc}    
\usepackage{hyperref}       
\usepackage{url}            
\usepackage{booktabs}       
\usepackage{amsfonts}       
\usepackage{nicefrac}       
\usepackage{microtype}      
\usepackage{url}            
\usepackage{booktabs}       
\usepackage{amsfonts}       
\usepackage{nicefrac}       
\usepackage{microtype}      
\usepackage{nccmath}
\usepackage{times}
\usepackage{graphicx}
\usepackage{natbib}
\usepackage{algorithm}
\usepackage{algorithmic}
\usepackage{hyperref}
\usepackage{amsfonts}
\usepackage{amsmath}
\usepackage{mathtools}
\usepackage{graphicx}
\usepackage{enumerate}
\usepackage{algorithm}
\usepackage{algorithmic}
\usepackage{thm-restate}

\usepackage{url}
\usepackage{dsfont}
\usepackage[mathscr]{euscript}
\usepackage{booktabs}
\usepackage{makecell}
\usepackage{xcolor, colortbl}
\usepackage[normalem]{ulem}
\usepackage{soul}
\usepackage{prettyref}

\definecolor{Gray}{gray}{0.85}
\newcolumntype{g}{>{\columncolor{Gray}}c}
\hypersetup{
  colorlinks   = true,
  urlcolor     = blue,
  linkcolor    = blue,
  citecolor   = blue
}

\def\Rset{\mathbb{R}}

\DeclareMathOperator*{\E}{\mathbb{E}}

\DeclareMathOperator*{\argmin}{\rm argmin}

\DeclarePairedDelimiter{\abs}{\lvert}{\rvert} 
\DeclarePairedDelimiter{\bracket}{[}{]}

\newcommand{\cD}{\mathcal{D}}
\newcommand{\cL}{\mathcal{L}}

\newcommand{\sD}{{\mathscr D}}

\newcommand{\sP}{{\mathscr P}}
\newcommand{\sQ}{{\mathscr Q}}
\newcommand{\sR}{{\mathscr R}}

\newcommand{\sX}{{\mathscr X}}
\newcommand{\sY}{{\mathscr Y}}

\newcommand{\sfD}{{\mathsf D}}
\newcommand{\sfd}{{\mathsf d}}
\newcommand{\sfp}{\mspace{1mu} {\mathsf p}}

\newcommand{\dmsa}{\texttt{\small DMSA}}
\newcommand{\gmsa}{\texttt{\small GMSA}}
\newcommand{\unif}{\texttt{\small unif}}
\newcommand{\joint}{\texttt{\small joint}}

\newcolumntype{P}[1]{>{\centering\arraybackslash}p{#1}}

\newrefformat{th}{Theorem~\ref{#1}}
\newrefformat{prop}{Proposition~\ref{#1}}
\newrefformat{lemma}{Lemma~\ref{#1}}

\newcommand{\h}{\widehat}

\newcommand{\e}{\epsilon}

\renewcommand{\set}[2][]{#1 \{ #2 #1 \} }
\newcommand{\ignore}[1]{}

\title{A Discriminative Technique for Multiple-Source Adaptation}

\coltauthor{\Name{Corinna Cortes}\Email{corinna@google.com}\\
\addr Google Research 
\AND
\Name{Mehryar Mohri}\Email{mohri@google.com}\\
\addr Google Research and 
Courant Institute of Mathematical Sciences, New York
\AND
\Name{Ananda Theertha Suresh}\Email{theertha@google.com}\\
\addr Google Research, New York
\AND
\Name{Ningshan Zhan}\Email{nzhang@stern.nyu.edu}\\
\addr Hudson River Trading
}

\begin{document}
\maketitle
\begin{abstract}

  We present a new discriminative technique for the multiple-source
  adaptation, MSA, problem. Unlike previous work, which relies on density
  estimation for each source domain, our solution only requires
  conditional probabilities that can easily be accurately estimated from
  unlabeled data from the source domains. We give a detailed analysis
  of our new technique, including general guarantees based on R\'enyi
  divergences, and learning bounds when conditional Maxent is used for
  estimating conditional probabilities for a point to belong to a source domain. We show that these guarantees
  compare favorably to those that can be derived for the generative
  solution, using kernel density estimation.  Our experiments with
  real-world applications further demonstrate that our new
  discriminative MSA algorithm outperforms the previous generative
  solution as well as other domain adaptation baselines.

\end{abstract}

\section{Introduction}

Learning algorithms are applied to an increasingly broad array of
problems. For some tasks, large amounts of labeled data are available
to train very accurate predictors.  But, for most
new problems or domains, no such supervised information is at the
learner's disposal. Furthermore, labeling data is costly since it
typically requires human inspection and agreements between multiple
expert labelers. Can we leverage past predictors learned for various
domains and combine them to devise an accurate one for a new task?
Can we provide guarantees for such combined predictors? How should we
define that combined predictor?  These are some of the challenges of
\emph{multiple-source domain adaptation}.

The problem of domain adaptation from multiple sources admits
distinct instances defined by the type of source information
available to the learner, the number of source domains, and the
amount of labeled and unlabeled data available from the target domain
\citep{MansourMohriRostamizadeh2008,MansourMohriRostamizadeh2009,
hoffman2018algorithms,pan_tkda2010,MuandetBalduzziScholkopf2013,xu_eccv14,
hoffman_eccv12,gong_icml13,gong_nips13,zhang2015multi,
ganin2016domain, tzeng2015simultaneous, motiian2017unified,
motiian2017few, wang2019few, konstantinov2019robust, liu2015multiple, saito2019semi, wang2019transfer}.
The specific instance we are considering is one where the learner has
access to multiple source domains and where, for each
domain, they only have at their disposal a predictor trained for that domain 
and some amount of unlabeled data. No other information 
about the source domains, in particular no labeled data
is available. The target domain or distribution is unknown but it
is assumed to be in the convex hull of the source distributions,
or relatively close to that.
The \emph{multiple-source adaptation (MSA) problem} consists of
combining relatively accurate predictors available for each source
domain to derive an accurate predictor for \emph{any} such new mixture
target domain. This problem was first theoretically studied by
\citet{MansourMohriRostamizadeh2008,MansourMohriRostamizadeh2009} and
subsequently by \citet{hoffman2018algorithms}, who further provided an
efficient algorithm for this problem and reported the results of a series of
experiments with that algorithm and favorable comparisons with 
alternative solutions.

As pointed out by these authors, this problem arises in a variety of
different contexts. In speech recognition, each domain may correspond
to a different group of speakers and an acoustic model learned for
each domain may be available. Here, the problem consists of devising a
general recognizer for a broader population, a mixture of the source
domains \citep{liao_icassp13}. Similarly, in object recognition, there
may be accurate models trained on different image databases and the
goal is to come up with an accurate predictor for a general domain,
which is likely to be close to a mixture of these sources
\citep{efros_cvpr11}. A similar situation often appears in sentiment
analysis and various other natural language processing problems where
accurate predictors are available for some source domains such as TVs,
laptops and CD players, each previously trained on labeled data, but
no labeled data or predictor is at hand for the broader category
of electronics, which can be viewed as a mixture of the sub-domains
\citep{blitzer_acl07, dredze_nips08}.

An additional motivation for this setting of multiple-source
adaptation is that often the learner does not have access to labeled
data from various domains for legitimate reasons such as privacy or
storage limitation. This may be for example labeled data from various
hospitals, each obeying strict regulations and privacy rules. But, a
predictor trained on the labeled data from each hospital may be
available. Similarly, a speech recognition system trained on data from
some group may be available but the many hours of source labeled data
used to train that model may not be accessible anymore, due to the
very large amount of disk space it requires.  Thus, in many cases, the
learner cannot simply merge all source labeled data to learn a
predictor.

\textbf{Main contributions.}  In Section~\ref{sec:solutions}, we
present a new \emph{discriminative technique} for the MSA problem,
Previous work showed that a \emph{distribution-weighted combination}
of source predictors benefited from favorable theoretical guarantees
\citep{MansourMohriRostamizadeh2008,
  MansourMohriRostamizadeh2009,hoffman2018algorithms}.  However, that
\emph{generative solution} requires an accurate density estimation for
each source domain, which, in general, is a difficult problem.
Instead, our solution only needs conditional probabilities, which is easier to
accurately estimate from unlabeled data from the source domains.
We also describe an efficient DC-programming optimization algorithm
for determining the solution of our discriminative technique, which is
somewhat similar to but distinct from that of previous work, since it
requires a new DC-decomposition.

In Section~\ref{sec:theory}, we give a new and detailed theoretical
analysis of our technique, starting with new general guarantees that
depend on the R\'enyi divergences between the target distribution and
mixtures of the true source distributions, instead of mixtures of
estimates of those distributions (Section~\ref{sec:solutions}).  We
then present finite sample learning bounds for our new discriminative
solution when conditional Maxent is used for estimating conditional
probabilities.  We also give a new and careful analysis of the
previous generative solution, when using kernel density estimation,
including the first finite sample generalization bound for that
technique.
We show that the theoretical guarantees for our discriminative
solution compare favorably to those derived for the generative
solution in several ways.
While we benefit from some of the analysis in previous work
\citep{hoffman2018algorithms}, our proofs and techniques
for both solutions are new and non-trivial.

We further report the results of several experiments with our
discriminative algorithm both with a synthetic dataset and several
real-world applications (Section~\ref{sec:experiments}). Our results
demonstrate that, in all tasks, our new solution outperforms the
previous work's generative solution, which had been shown itself to
surpass empirically the accuracy of other domain adaptation baselines
\cite{hoffman2018algorithms}.  They also indicate that our
discriminative technique requires fewer samples to achieve a high
accuracy than the previous solution, which matches our theoretical
analysis.

\textbf{Related work.} There is a very broad literature dealing with
single-source and multiple-source adaptation with distinct scenarios.
Here, we briefly discuss the most related previous work, in addition
to \citep{MansourMohriRostamizadeh2008,
  MansourMohriRostamizadeh2009,hoffman2018algorithms}.
The idea of using a domain classifier to combine domain-specific
predictors has been suggested in the past.  \citet{jacobs1991adaptive}
and \citet{nowlan1991evaluation} considered an adaptive mixture of
experts model, where there are multiple expert networks, as well as a
gating network to determine which expert to use for each input. The
learning method consists of jointly training the individual expert
networks and the gating network.  In our scenario, no labeled data is
available, expert networks are pre-trained separately from the gating
network, and our gating network admits a specific structure.
\citet{hoffman_eccv12} learned a domain classifier via SVM on all
source data combined, and predicted on new test points with the
weighted sum of domain classifier's scores and domain-specific
predictors. Such linear combinations were later shown by
\citet{hoffman2018algorithms} to perform poorly in some cases and not
to benefit from strong guarantees.  More recently, \citet{xu2018deep}
deployed multi-way adversarial training to multiple source domains to
obtain a domain discriminator, and also used a weighted sum of
discriminator's scores and domain-specific predictors to make
predictions.  \citet{zhao2018adversarial} considered a scenario where
labeled samples are available, unlike our scenario, and learned a
domain classifier to approximate the discrepancy term in a MSA
generalization bound, and proposed the MDAN model to minimize the
bound. 

We start with a description of the learning scenario we consider and
the introduction of notation and definitions relevant to our analysis
(Section~\ref{sec:setup}).

\section{Learning Scenario}
\label{sec:setup}

We consider the MSA problem in the
general stochastic scenario studied by \citet{hoffman2018algorithms}
and adopt the same notation.

Let $\sX$ denote the input space, $\sY$ the output space.  We will
identify a \emph{domain} with a distribution over $\sX \times
\sY$. There are $p$ source domains $\sD_1, \ldots, \sD_p$. 
As in previous work, we adopt the assumption that the domains share a
common conditional probability $\sD(\cdot | x)$ and thus
$\sD_k(x, y) = \sD_k(x) \sD(y | x)$, for all
$(x, y) \in \sX \times \sY$ and $k \in [p]$.  This is a natural
assumption in many common machine learning tasks. For example, in
image classification, the label of a picture as a \emph{dog} may not
depend much on whether the picture is from a personal collection or a
more general dataset.  Nevertheless, as discussed in \citet{hoffman2018algorithms}, this condition can be relaxed and, here
too, all our results can be similarly extended to a more general case
where the conditional probabilities vary across domains. Since not all $k$ conditional probabilities are equally accurate on the single $x$, better target accuracy can be obtained by combining the $\sD_k(x)$s in an $x$-dependent way.

For each domain $\sD_k$, $k \in [p]$, the learner has access to some
unlabeled data drawn i.i.d.\ from the marginal distribution $\sD_k$
over $\sX$, as well as to a predictor $h_k$.  We consider two types of
predictor functions $h_k$, and their associated loss functions $\ell$
under the \emph{regression model (R)} and the \emph{probability model
  (P)} respectively:
\begin{align*}
  & h_k \colon \sX \to \Rset
  && \ell \colon \Rset \times \sY \to \Rset_+ & \text{(\emph{R})} \\
  & h_k  \colon \sX \times \sY \to [0, 1]
  && \ell \colon [0, 1] \to \Rset_+ & \text{(\emph{P})}
\end{align*}
In the probability model, the predictors are assumed to be 
normalized: $\sum_{y \in \sY} h(x, y) = 1$ for all $x \in \sX$.
We will denote by
$\cL(\sD, h)$ the expected loss of a predictor $h$ with respect to the
distribution $\sD$:
\begin{align*}
  \cL(\sD, h) & = \E_{(x, y) \sim \sD} \big[\ell( h(x), y) \big] 
\quad\text{(\emph{R})}, \\
  \cL(\sD, h) & = \E_{(x, y) \sim \sD} \big[\ell(h(x, y)) \big] 
\quad\text{(\emph{P})}.
\end{align*}
Our theoretical results are general and only assume that the loss
function $\ell$ is convex, continuous.  But, in the regression model,
we will be particularly interested in the squared loss
$\ell(h(x), y) = (h(x) - y)^2$ and, in the probability model, the
cross-entropy loss (or $\log$-loss) $\ell(h(x, y)) = -\log h(x, y)$.

We will also assume that each source predictor $h_k$ is $\e$-accurate
on its domain for some $\e > 0$, that is,
$\forall k \in [p], \cL(\sD_k, h_k) \leq \e$. Our assumption that the
loss of $h_k$ is bounded, implies that $\ell(h_k(x), y) \leq M$ or
$\ell(h_k(x,y))\leq M$, for all $(x, y) \in \sX \times \sY$ and
$k \in [p]$.

Let $\Delta = \set{\lambda = (\lambda_1, \ldots, \lambda_p) \colon \sum_{k
    = 1}^p \lambda_k = 1, \lambda_k \geq 0}$ denote the simplex in $\Rset^p$,
and
let $\cD = \set{\sD_\lambda \colon \sD_\lambda = \sum_{k = 1}^p \lambda_k
  \sD_k, \lambda \in \Delta}$ be the family of all mixtures of the source domains, that is
the convex hull of $\sD_k$s. 

Since not all $k$ source predictors are necessarily equally accurate on the single input $x$, better target accuracy can be obtained by combining the $h_k(x)$s dependent on $x$.
The MSA problem for the learner is exactly how to combine these source
predictors $h_k$ to design a predictor $h$ with small expected loss
for any unknown target domain $\sD_T$ that is an element of $\cD$, or any unknown distribution $\sD_T$
close to $\cD$.

Our theoretical guarantees are presented in terms of \emph{R\'enyi
  divergences}, a broad family of divergences between distributions
generalizing the relative entropy.  The R\'enyi Divergence is
parameterized by $\alpha \in [0, +\infty]$ and denoted by
$\sfD_\alpha$. The $\alpha$-R\'enyi Divergence between two
distributions $\sP$ and $\sQ$ is defined by:
\begin{equation*}
  \sfD_\alpha(\sP \parallel \sQ) = \frac{1}{\alpha - 1}
  \log \mspace{-2mu} \bracket[\Bigg]{ \sum_{(x, y) \in \sX \times \sY}
  \mspace{-20mu} \sP(x, y)  \left[ \frac{\sP(x, y)}{\sQ(x, y)}
  \right]^{\alpha - 1} }\mspace{-3mu},
\end{equation*}
where, for $\alpha \in \set{0, 1, +\infty}$, the expression is defined
by taking the limit \citep{arndt}. For $\alpha = 1$, the R\'enyi
divergence coincides with the relative entropy. We will denote by
$\sfd_\alpha(\sP \parallel \sQ)$ the exponential of
$\sfD_\alpha(\sP \parallel \sQ)$:
\begin{equation*}
\sfd_\alpha(\sP \parallel \sQ) 
= \bracket[\Bigg]{ \sum_{(x, y) \in \sX \times \sY} 
\frac{\sP^\alpha(x, y)}{\sQ^{\alpha - 1}(x, y)}
}^{\frac{1}{\alpha - 1}}. 
\end{equation*}

In the following, to alleviate the notation, we abusively denote the marginal
distribution of a distribution $\sD_k$ defined over $\sX \times \sY$
in the same way and rely on the arguments for disambiguation, e.g.
$\sD_k(x)$ vs. $\sD_k(x, y)$.

\section{Discriminative MSA solution}
\label{sec:solutions}

In this section we present our new solution for the MSA problem and give an efficient
algorithm for determining its parameter. But first we describe the previous solution.

\subsection{Previous Generative Technique}

In previous work, it was shown that, in general, standard convex
combinations of source predictors can perform poorly
\citep{MansourMohriRostamizadeh2008,MansourMohriRostamizadeh2009,hoffman2018algorithms}:
in some problems, even when the source predictors have zero loss, no
convex combination can achieve a loss below some constant for a
uniform mixture of the source distributions.  Instead, a
\emph{distribution-weighted} solution was proposed to the MSA
problem. That solution relies on density estimates $\h \sD_k$ for the
marginal distributions $x \mapsto \sD_k(x)$, which are obtained via
techniques such as kernel density estimation, for each source domain
$k \in [p]$ independently.

Given such estimates, the solution is defined as follows in the
regression and probability models, for all
$(x, y) \in \sX \times \sY$:
\begin{align}
\h h_z(x) & = \sum_{k = 1}^p \frac{z_k \h \sD_k(x) }
{\sum_{j = 1}^p z_j \h \sD_j(x) } h_k(x), 
\label{eq:h_reg} \\
\h h_z(x, y) & = \sum_{k = 1}^p\frac{z_k \h \sD_k(x) }{
\sum_{j = 1}^p z_j \h \sD_j(x) } h_k(x, y),
\label{eq:h_prob}
\end{align}
with $z \in \Delta$ is a parameter determined via an optimization
problem such that $h_z$ admits the same loss for all $\sD_k$. We are assuming here that the estimates verify
$\h \sD_k(x) > 0$ for all $x \in \sX$ and therefore that the
denominators are positive. Otherwise, a small positive number
$\eta > 0$ can be added to the denominators of the solutions, as in
previous work. We are adopting this assumption only to simplify
the presentation. For the probability model, the joint estimates
$\h \sD_k(x, y)$ used in \citep{hoffman2018algorithms} can be
equivalently replaced by marginal ones $\h \sD_k(x)$ since all domain
distributions share the same conditional probabilities.

Since this previous work relies on density estimation, we will refer to it
as a \emph{generative solution to the MSA problem}, in short,
\gmsa. The technique benefits from the following general guarantee
\citep{hoffman2018algorithms}, where we extend the
R\'enyi divergences to divergences between a 
distribution $\sD$ and a set of distributions $\cD$ and write $\sfD_\alpha(\sD \parallel \cD) = \min_{\sD \in \cD} \sfD_\alpha(\sD \parallel \sD)$.

\begin{theorem}
\label{th:nestimate}
For any $\delta > 0$, there exists a $z \in \Delta$ such that the
following inequality holds for any $\alpha > 1$ and arbitrary target
distribution $\sD_T$:
\begin{equation*}
\cL(\sD_T, \h h_z) 
\leq \left[(\h \e + \delta) \, \sfd_\alpha(\sD_T \parallel \h \cD)
\right]^{\frac{\alpha- 1}{\alpha}} M^{\frac{1}{\alpha}},
\end{equation*}
where $\h \e = \max_{k \in [p]} \Big[\e  \, \sfd_\alpha(\h \sD_k \parallel \sD_k)
\Big]^{\frac{\alpha - 1}{\alpha}} M^{\frac{1}{\alpha }}$, 
and $\h \cD = \left\{ \sum_{k = 1}^p \lambda_k \h \sD_k\colon \lambda \in \Delta \right\}$.
\end{theorem}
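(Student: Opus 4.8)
The plan is to combine a H\"older-type change-of-measure inequality with a Brouwer fixed-point argument producing a weight vector $z$ for which the distribution-weighted predictor $\h h_z$ has nearly equalized loss over the whole family $\h\cD$; this follows the strategy of \citep{MansourMohriRostamizadeh2008,hoffman2018algorithms}, adapted to the estimated densities $\h\sD_k$.

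First I would record (or invoke, if it already appears earlier) the elementary bound that for any two distributions $\sP,\sQ$ on $\sX\times\sY$, any $f$ with $0\le f\le M$, and any $\alpha>1$,
\[
\E_{\sP}[f]\le\bigl[\E_{\sQ}[f]\;\sfd_\alpha(\sP\parallel\sQ)\bigr]^{\frac{\alpha-1}{\alpha}}M^{\frac{1}{\alpha}} .
\]
This comes from writing $\E_{\sP}[f]=\sum \bigl(\sP/\sQ^{\frac{\alpha-1}{\alpha}}\bigr)\bigl(\sQ^{\frac{\alpha-1}{\alpha}}f\bigr)$, applying H\"older with conjugate exponents $\alpha$ and $\tfrac{\alpha}{\alpha-1}$, identifying $\sum\sP^\alpha/\sQ^{\alpha-1}=\sfd_\alpha(\sP\parallel\sQ)^{\alpha-1}$, and using $f^{\frac{\alpha}{\alpha-1}}\le M^{\frac{1}{\alpha-1}}f$. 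Applied with $\sP=\h\sD_k$, $\sQ=\sD_k$, $f=\ell(h_k(\cdot),\cdot)$ and $\cL(\sD_k,h_k)\le\e$, it gives $\cL(\h\sD_k,h_k)\le[\e\,\sfd_\alpha(\h\sD_k\parallel\sD_k)]^{\frac{\alpha-1}{\alpha}}M^{\frac{1}{\alpha}}$, hence $\beta:=\max_k\cL(\h\sD_k,h_k)\le\h\e$ (monotonicity of $t\mapsto t^{\frac{\alpha-1}{\alpha}}$).

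Next I would construct $z$. Since $\h h_z$ is, pointwise in $x$, the convex combination of the $h_k$ with weights $w^z_k(x)=z_k\h\sD_k(x)\big/\sum_j z_j\h\sD_j(x)$ (well defined because $\h\sD_j(x)>0$), convexity of $\ell$ and the shared conditional give, for every $\lambda\in\Delta$,
\[
\cL(\h\sD_\lambda,\h h_z)\;\le\;\textstyle\sum_{k}\sum_{x}\h\sD_\lambda(x)\,w^z_k(x)\,L_k(x)\;=:\;U_\lambda(z),
\]
where $L_k(x)=\sum_y\sD(y\mid x)\ell(h_k(x),y)\le M$; the bound $U_\lambda(z)$ is linear in $\lambda$, so it suffices to control it at the vertices $e_m$, and a direct computation shows $\sum_m z_m U_{e_m}(z)=\sum_k z_k\,\cL(\h\sD_k,h_k)\le\beta$ for every $z\in\Delta$. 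Fix $\delta>0$ and $\tau\in(0,1)$, and apply Brouwer to the continuous self-map of $\Delta$
\[
z\;\longmapsto\;(1-\tau)\Bigl(\tfrac{z_k(U_{e_k}(z)+\delta)}{\sum_j z_j(U_{e_j}(z)+\delta)}\Bigr)_{k}\;+\;\tfrac{\tau}{p}(1,\dots,1);
\]
its fixed point $z^\tau$ has all coordinates $\ge\tau/p$, and rearranging the fixed-point identity coordinatewise yields $U_{e_k}(z^\tau)+\delta\le\frac{\beta+\delta}{1-\tau}$ for every $k$, so choosing $\tau$ small enough (e.g.\ $\tau\le\delta/(M+2\delta)$) forces $U_{e_k}(z^\tau)\le\beta+\delta\le\h\e+\delta$, hence $\cL(\h\sD_\lambda,\h h_{z^\tau})\le\h\e+\delta$ for all $\lambda\in\Delta$ by linearity. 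The hard part is exactly this step: the unperturbed map only equalizes $U_{e_k}$ over the support of its fixed point, so the $\tau$-perturbation is essential to push the fixed point into the interior, and one then has to quantify how small $\tau$ must be relative to $\delta$ (using $\beta\le M$).

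Finally, taking $z=z^\tau$ and applying the change-of-measure bound once more with $\sP=\sD_T$ (arbitrary), $\sQ=\h\sD_\lambda$, $f=\ell(\h h_z(\cdot),\cdot)$ gives, for every $\lambda\in\Delta$,
\[
\cL(\sD_T,\h h_z)\le\bigl[(\h\e+\delta)\,\sfd_\alpha(\sD_T\parallel\h\sD_\lambda)\bigr]^{\frac{\alpha-1}{\alpha}}M^{\frac{1}{\alpha}} ,
\]
and minimizing the right-hand side over $\lambda\in\Delta$, using that $\exp$ is increasing so $\min_{\lambda}\sfd_\alpha(\sD_T\parallel\h\sD_\lambda)=\sfd_\alpha(\sD_T\parallel\h\cD)$, yields the stated inequality. (In the probability model the same argument goes through verbatim with $L_k(x)=\sum_y\sD(y\mid x)\ell(h_k(x,y))$ and $\h h_z(x,y)$ the corresponding convex combination.)
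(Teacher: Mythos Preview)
Your proof is correct and follows essentially the same strategy as the one in \citet{hoffman2018algorithms}, which this paper cites rather than re-proves: a H\"older/R\'enyi change-of-measure lemma to transfer losses between distributions, combined with a Brouwer fixed-point argument on the simplex to produce a $z$ for which the (convexity) upper bound $U_{e_k}(z)$ is uniformly at most $\h\e+\delta$ over $k$, followed by a second change-of-measure from $\h\sD_\lambda$ to $\sD_T$ and a minimization over $\lambda$.

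The only place where your write-up differs slightly from the cited argument is in the specific form of the fixed-point map: you use the $\tau$-perturbed multiplicative update $z_k\mapsto(1-\tau)\,z_k(U_{e_k}(z)+\delta)/S+\tau/p$ and then extract $U_{e_k}+\delta\le S/(1-\tau)\le(\beta+\delta)/(1-\tau)$ from the coordinatewise identity, whereas the original papers obtain the near-equalization of losses via a closely related but not identical Brouwer construction. Both routes exploit the same structural identity $\sum_m z_m U_{e_m}(z)=\sum_k z_k\cL(\h\sD_k,h_k)$ and yield the same conclusion; your version has the minor advantage of making the quantitative choice of $\tau$ (namely $\tau\le\delta/(M+2\delta)$) completely explicit.
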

The bound depends on the quality of the density estimates via the
R\'enyi divergence between $\h \sD_k$ and $\sD_k$, for each
$k \in [p]$, and the closeness of the target distribution $\sD_T$ to
the mixture family $\h \cD$. For
$\alpha = +\infty$, for $\sD_T$ close to $\h \cD$ and accurate
estimates of $\sD_k$, $\sfd_\alpha(\sD_T \parallel \h \cD)$ and
$\sfd_\alpha(\h \sD_k \parallel \sD_k)$ are close to one and the upper
bound is as a result close to $\e$. That is, with good density estimates, the
error of $h_z$ is no worse than that of the source predictors
$h_k$s. However, obtaining good density estimators is a difficult
problem and in general requires large amounts of data. In the
following section, we provide a new and less data-demanding solution
based on conditional probabilities.

\subsection{New Discriminative Technique}
\label{subsec:gz}

Let $\sD$ denote the distribution over $\sX$ defined by
$\sD(x) = \frac{1}{p} \sum_{k = 1}^p \sD_k(x)$. We will assume and can enforce that $\sD$ is the
distribution according to which we can expect to receive unlabeled
samples from the $p$ sources to train our discriminator. 
We will denote by $\sQ$ the
distribution over $\sX \times [p]$ defined by
$\sQ(x, k) = \frac{1}{p} \sD_k(x)$, whose $\sX$-marginal
coincides with $\sD$: $\sQ(x) = \sD(x)$.

Our new solution relies on estimates $\h \sQ(k | x)$ of the
conditional probabilities $\sQ(k | x)$ for each domain $k \in [p]$, that is
the probability that point $x$ belongs to source $k$.  Given such
estimates, our new solution to the MSA problem is defined as follows
in the regression and probability models, for all
$(x, y) \in \sX \times \sY$:
\begin{align}
\h g_z(x) 
& = \sum_{k = 1}^p \frac{z_k \h \sQ(k | x) }
{\sum_{j = 1}^p z_j \h \sQ(j | x) } h_k(x),
\label{eq:g_reg} \\
\h g_z(x, y) & = \sum_{k = 1}^p\frac{z_k \h \sQ(k | x) }{
\sum_{j = 1}^p z_j \h \sQ(j | x) } h_k(x, y),
\label{eq:g_prob}
\end{align} 
with $z \in \Delta$ being a parameter determined via an optimization
problem. As for the \gmsa\ solution, we are assuming here that the
estimates verify $\h \sQ(k | x) > 0$ for all $x \in \sX$ and therefore
that the denominators are positive. Otherwise, a small positive number
$\eta > 0$ can be added to the denominators of the solutions, as in
previous work. We are adopting this assumption only to simplify
the presentation.  Note that in the probability model, $\h g_z(x, y)$
is normalized since $h_k$s are normalized:
$\sum_{y \in \sY} g_z(x, y) = 1$ for all $x \in \sX$.

Since our solution relies on estimates of conditional probabilities of domain membership,
we will refer to it as a \emph{discriminative solution to the MSA
  problem}, \dmsa\ in short.

Observe that, by the Bayes' formula, the conditional probability
estimates $\h \sQ(k | x)$ induce density estimates $\h \sD_k(x)$ of
the marginal distributions $x \mapsto \sD_k(x)$:
\begin{equation}
\label{eq:estimate_D_k}
\h \sD_k(x) = \frac{\h \sQ(k | x) \sD(x)}{\h \sQ(k)}
\end{equation}
where $\h \sQ(k) = \sum_{x \in \sX} \h \sQ(k | x) \sD(x)$.  For an
exact estimate, that is $\h \sQ(k | x) = \sQ(k | x)$, the formula
holds with $\h \sQ(k) = \sum_{x \in \sX} \sQ(x, k) = \frac{1}{p}$.  In
light of this observation, we can establish the following connection
between the \gmsa\ and \dmsa\ solutions.

\begin{proposition}
\label{prop:hzgz}
Let $\h h_z$ be the \gmsa\ solution using the estimates
$\h \sD_k$ defined in \eqref{eq:estimate_D_k}. Then, for any
$z \in \Delta$, we have $\h h_z = \h g_{z'}$ with 
$z'_k = \frac{z_k/\h \sQ(k)}{\sum_{j = 1}^p z_j/\h \sQ(j)}$,
for all $k \in [p]$.
\end{proposition}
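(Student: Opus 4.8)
The plan is to prove the identity by direct substitution of the induced density estimates \eqref{eq:estimate_D_k} into the definition \eqref{eq:h_reg} of the \gmsa\ predictor, and then to observe that every factor in a summand that does not depend on the summation index cancels between numerator and denominator, leaving precisely the \dmsa\ predictor with the reweighted parameter.

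Concretely, I would first replace $\h \sD_k(x)$ by $\h \sQ(k \mid x)\,\sD(x)/\h \sQ(k)$ in \eqref{eq:h_reg}. The common factor $\sD(x)$ appears in every term of both the numerator and the denominator of each summand and therefore cancels, yielding
\begin{equation*}
\h h_z(x) = \sum_{k=1}^p \frac{\big(z_k/\h\sQ(k)\big)\,\h\sQ(k\mid x)}{\sum_{j=1}^p \big(z_j/\h\sQ(j)\big)\,\h\sQ(j\mid x)}\, h_k(x).
\end{equation*}
Writing $w_k = z_k/\h\sQ(k)$ and $S = \sum_{j=1}^p w_j$, I would then multiply the numerator and denominator inside the sum by $1/S$; since $S$ is independent of $k$ this is legitimate and produces $\sum_{k=1}^p \frac{z'_k\,\h\sQ(k\mid x)}{\sum_{j} z'_j\,\h\sQ(j\mid x)}\, h_k(x) = \h g_{z'}(x)$ with $z'_k = w_k/S$, which is exactly the stated reparametrization.

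The only points that need a word of justification are that $z'$ is a well-defined element of $\Delta$ and that all denominators are positive: each $\h\sQ(k) = \sum_{x}\h\sQ(k\mid x)\,\sD(x)$ is strictly positive because $\h\sQ(k\mid x)>0$ by the standing assumption and $\sD$ is a probability distribution, hence $w_k>0$ and $S>0$, and $\sum_k z'_k = S/S = 1$; the denominator $\sum_j z'_j \h\sQ(j\mid x)$ is then also positive for every $x$. The probability-model case is identical, replacing $h_k(x)$ by $h_k(x,y)$ in \eqref{eq:h_prob} and \eqref{eq:g_prob}, with no other change. There is essentially no obstacle here—the statement is a purely algebraic identity, and the cancellation of $\sD(x)$ together with the positivity of the normalizers $\h\sQ(k)$ are both immediate.
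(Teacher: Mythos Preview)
Your proposal is correct and follows essentially the same approach as the paper: substitute \eqref{eq:estimate_D_k} into \eqref{eq:h_reg}, cancel the common factor $\sD(x)$, and recognize the resulting expression as $\h g_{z'}$. The paper's proof is terser and leaves the rescaling by $S$ and the verification that $z' \in \Delta$ implicit; one tiny slip in your write-up is that $w_k = z_k/\h\sQ(k)$ need only be $\geq 0$ (since $z_k$ may vanish), though $S > 0$ still holds because $\sum_k z_k = 1$, so nothing is affected.
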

\begin{proof}
First consider the regression model. By definition of the \gmsa\
solution, we can write:
\begin{align*}
\h h_z(x) 
& = \sum_{k = 1}^p \frac{z_k \frac{\h \sQ(k | x) \sD(x)}{\h
  \sQ(k)}}{\sum_{j = 1}^p z_j \frac{\h \sQ(j | x) \sD(x)}{\h \sQ(j)}} h_k(x) \\
& = \sum_{k = 1}^p \frac{\frac{z_k}{\h \sQ(k)} \h \sQ(k | x) }{\sum_{j
  = 1}^p \frac{z_j}{\h \sQ(j)} \h \sQ(j | x)} h_k(x) 
= g_{z'}(x).
\end{align*}
The probability model's proof is 
syntactically the same.
\end{proof}
In view of this result, the \dmsa\ technique benefits from a
guarantee similar to \gmsa\ (Theorem~\ref{th:nestimate}),
where for \dmsa\ the density estimates are based on the conditional probability
estimates $\h \sQ(k | x)$. We refer readers to the full version of the paper for all the proofs.
\begin{theorem}
\label{th:nestimate-bis}
For any $\delta > 0$, there exists a $z \in \Delta$ such that the
following inequality holds for any $\alpha > 1$ and arbitrary target
distribution $\sD_T$:
\begin{equation*}
\cL(\sD_T, \h g_z) 
\leq \left[(\h \e + \delta) \, \sfd_\alpha(\sD_T \parallel \h \cD)
\right]^{\frac{\alpha- 1}{\alpha}} M^{\frac{1}{\alpha}},
\end{equation*}
where $\h \e = \max_{k \in [p]} \Big[\e  \, \sfd_\alpha(\h \sD_k \parallel \sD_k)
\Big]^{\frac{\alpha - 1}{\alpha}} M^{\frac{1}{\alpha }}$, 
and $\h \cD = \left\{ \sum_{k = 1}^p \lambda_k \h \sD_k\colon \lambda
  \in \Delta \right\}$, with $\h \sD_k(x, y) 
= \frac{\h \sQ(k | x) \sD(x, y)}{\h \sQ(k)}$.
\end{theorem}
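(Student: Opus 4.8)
The plan is to reduce the statement entirely to the generative guarantee of Theorem~\ref{th:nestimate}, using the exact reparametrization of Proposition~\ref{prop:hzgz}. Concretely, I would start from the conditional-probability estimates $\h\sQ(k \mid x)$ and form the induced density estimates $\h\sD_k(x) = \frac{\h\sQ(k \mid x)\, \sD(x)}{\h\sQ(k)}$ of \eqref{eq:estimate_D_k}, equivalently the joint estimates $\h\sD_k(x, y) = \frac{\h\sQ(k \mid x)\, \sD(x, y)}{\h\sQ(k)}$, which factor as $\h\sD_k(x)\, \sD(y \mid x)$ since all source domains share the conditional $\sD(y \mid x)$. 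The first (small) step is to check that these are admissible inputs to Theorem~\ref{th:nestimate}: each $\h\sD_k$ is a genuine probability distribution, since $\sum_{x \in \sX} \h\sD_k(x) = \frac{1}{\h\sQ(k)} \sum_{x \in \sX} \h\sQ(k \mid x)\, \sD(x) = 1$, and it is positive wherever $\sD$ is, because $\h\sQ(k \mid x) > 0$; on the remaining null set the value is irrelevant, or one adds the usual $\eta > 0$ as in the \gmsa\ and \dmsa\ constructions.

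Next I would apply Theorem~\ref{th:nestimate} verbatim to the \gmsa\ solution built from these $\h\sD_k$. This yields, for any fixed $\delta > 0$, a parameter $z^\star \in \Delta$ depending only on $\delta$ (and the fixed problem data), not on $\alpha$ or $\sD_T$, such that
\[
\cL(\sD_T, \h h_{z^\star}) \leq \bigl[(\h\e + \delta)\, \sfd_\alpha(\sD_T \parallel \h\cD)\bigr]^{\frac{\alpha - 1}{\alpha}} M^{\frac{1}{\alpha}}
\]
for every $\alpha > 1$ and every target $\sD_T$, where $\h\e$ and $\h\cD$ are precisely the quantities appearing in Theorem~\ref{th:nestimate-bis}: they are defined identically in terms of these $\h\sD_k$, so no translation is needed. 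Then I would transport this bound from $\h h_{z^\star}$ to a \dmsa\ solution: by Proposition~\ref{prop:hzgz}, $\h h_{z^\star} = \h g_{z'}$ with $z'_k = \frac{z^\star_k / \h\sQ(k)}{\sum_{j = 1}^p z^\star_j / \h\sQ(j)}$, and $z' \in \Delta$ depends only on $\delta$. Taking $z = z'$ gives $\cL(\sD_T, \h g_z) = \cL(\sD_T, \h h_{z^\star})$, so the displayed inequality is exactly the claimed bound; as in the \gmsa\ case the chosen $z$ is uniform across targets, since $z^\star$ and the $\h\sQ(k)$ are target-independent.

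As for the main obstacle, there really is none of substance: all the real work lies in Theorem~\ref{th:nestimate} (the general R\'enyi-divergence guarantee for the distribution-weighted solution, which this reuses as a black box) and in Proposition~\ref{prop:hzgz} (the identity linking the two parametrized families). The only points requiring care are bookkeeping: verifying that the Bayes-rule estimate $\h\sD_k$ is a legitimate density estimate, confirming that the map $z^\star \mapsto z'$ stays inside the simplex and is independent of $\alpha$ and $\sD_T$, and checking that $\h\e$ and $\h\cD$ in the conclusion coincide with those of Theorem~\ref{th:nestimate} because the underlying estimates agree. (In the full version one would also want to spell out the analogous reduction in the probability model, which is syntactically identical by the second half of the proof of Proposition~\ref{prop:hzgz}.)
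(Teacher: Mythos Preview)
Your proposal is correct and follows exactly the paper's approach: the authors explicitly state that, in view of Proposition~\ref{prop:hzgz}, the \dmsa\ guarantee follows from Theorem~\ref{th:nestimate} applied with the Bayes-induced density estimates $\h\sD_k(x,y) = \frac{\h\sQ(k \mid x)\,\sD(x,y)}{\h\sQ(k)}$, and then the reparametrization $z^\star \mapsto z'$ transports the bound to $\h g_{z'}$. Your bookkeeping checks (that $\h\sD_k$ is a bona fide density, that $z'$ lies in $\Delta$ and is target-independent, and that $\h\e$ and $\h\cD$ match) are exactly the points one needs to verify.
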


\subsection{Optimization Algorithm}
\label{subsec:alg}

By Proposition~\ref{prop:hzgz}, to determine the parameter $z'$
guaranteeing the bound of Theorem~\ref{th:nestimate-bis} for
$\h g_{z'}$, it suffices to determine the parameter $z$ that yields
the guarantee of Theorem~\ref{th:nestimate} for $\h h_z$, when using
the estimates $\h \sD_k = \frac{\h \sQ(k | x) \sD(x)}{\h \sQ(k)}$. As
shown by \cite{hoffman2018algorithms}, the parameter $z$ is the one
for which $\h h_z$ admits the same loss for all source domains, that
is $\cL(\h \sD_k, \h h_z) = \cL(\h \sD_{k'}, \h h_z)$ for all
$k, k' \in [p]$, where $\h \sD_k$ is the joint distribution derived
from $\h \sD_k$:
$\h \sD_k(x, y) = \h \sD_k(x) \sD(y | x) = \frac{\h \sQ(k | x) \sD(x,
  y)}{\h \sQ(k)}$, with
$\sD(x, y) = \frac{1}{p} \sum_{k = 1}^p \sD_k(x, y)$. Note,
$\h \sD (x, y)$ is abusively denoted the same way as $\h \sD(x)$ 
to avoid the introduction of additional notation, 
but the difference in arguments should suffice to help distinguish 
the two distributions.

Thus, using $\h g_{z'} = \h h_z$, to find $z$, and subsequently $z'$, it
suffices to solve the following optimization problem in $z$:
\begin{align}
\label{eq:minmax}
\min_{z \in \Delta}\max_{k \in [p]}  \quad
\cL(\h \sD_k ,\h g_{z'})  - \cL(\h \sD_z, \h g_{z'}),
\end{align}
where $z'_k = \frac{z_k/\h \sQ(k)}{\sum_{j = 1}^p z_j/\h \sQ(j)}$ and
$\h \sD_z = \sum_{k = 1}^p z_k \h \sD_k$.  As in previous work, this
problem can be cast as a DC-programming (difference-of-convex) problem
and solved using the DC algorithm
\citep{TaoAn1997,TaoAn1998,SriperumbudurLanckriet2009}.  However, we
need to derive a new DC-decomposition here, both for the regression
and the probability model, since the objective is distinct from that
of previous work. A detailed description of that DC-decomposition and
its proofs, as well as other details of the algorithm are given in
the full version of the paper. 

\section{Learning Guarantees}
\label{sec:theory}

In this section, we prove favorable learning guarantees for the
predictor $\h g_z$ returned by \dmsa, when using conditional maximum
entropy to derive domain estimates $\sQ(k | x)$. We first extend
Theorem~\ref{th:nestimate} and present a general theoretical guarantee
which holds for \dmsa\ and \gmsa\ (Section~\ref{sec:general}). Next,
in Section~\ref{sec:maxent}, we give a generalization bound for
conditional Maxent and use that to prove learning guarantees for
\dmsa. We then analyze \gmsa\ using kernel density estimation
(Section~\ref{sec:compare}), and show that \dmsa\ benefits from
significantly more favorable learning guarantees than \gmsa.

\subsection{General Guarantee}
\label{sec:general}

Theorem~\ref{th:nestimate} gives a guarantee in terms of a R\'enyi
divergence of $\sD_T$ and $\h \cD$, which depends on the empirical
estimates. Instead, we derive a bound in terms of a R\'enyi divergence
of $\sD_T$ and $\cD$  and, as with 
Theorem~\ref{th:nestimate}, the R\'enyi divergences between the
distributions $\sD_k$ and their estimates $\h \sD_k$.

To do so, we first prove an inequality that can be viewed as 
a triangle inequality
result for R\'enyi divergences.

\begin{restatable}{proposition}{TriangleInequalityProposition}
\label{prop:tri_ineq}
Let $\sP$, $\sQ$, $\sR$ be three distributions on $\sX \times
\sY$. Then, for any $\gamma \in (0, 1)$ and any $\alpha > \gamma$, the
following inequality holds:
\begin{align*}
\Big[\sfd_\alpha(\sP\parallel \sQ) \Big]^{\alpha-1}
\leq \Big[\sfd_{\frac{\alpha}{\gamma}}(\sP \parallel \sR)
\Big]^{\alpha-\gamma}
\Big[\sfd_{\frac{\alpha-\gamma}{1-\gamma}}(\sR \parallel \sQ)
\Big]^{\alpha-1}.
\end{align*}
\end{restatable}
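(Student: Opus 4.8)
The plan is to reduce the statement to a single application of H\"older's inequality. Recall that, by definition, $\left[\sfd_\alpha(\sP \parallel \sQ)\right]^{\alpha-1} = \sum_{(x,y) \in \sX \times \sY} \sP^\alpha(x,y)\, \sQ^{1-\alpha}(x,y)$, so the claimed inequality is nothing more than an upper bound on this sum. The key algebraic observation is the pointwise factorization
\[
\frac{\sP^\alpha(x,y)}{\sQ^{\alpha-1}(x,y)} = \frac{\sP^\alpha(x,y)}{\sR^{\alpha-\gamma}(x,y)} \cdot \frac{\sR^{\alpha-\gamma}(x,y)}{\sQ^{\alpha-1}(x,y)},
\]
valid wherever $\sP(x,y)>0$ and $\sR(x,y)>0$. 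I would then apply H\"older's inequality to these two factors with conjugate exponents $p = \frac{1}{\gamma}$ and $q = \frac{1}{1-\gamma}$; note these are both strictly greater than $1$ precisely because $\gamma \in (0,1)$, which is exactly where that hypothesis is used.

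\textbf{Carrying it out.} H\"older gives $\sum_{x,y} \frac{\sP^\alpha}{\sQ^{\alpha-1}} \leq \Big(\sum_{x,y} \big(\tfrac{\sP^\alpha}{\sR^{\alpha-\gamma}}\big)^{1/\gamma}\Big)^{\gamma} \Big(\sum_{x,y} \big(\tfrac{\sR^{\alpha-\gamma}}{\sQ^{\alpha-1}}\big)^{1/(1-\gamma)}\Big)^{1-\gamma}$. For the first factor, $\big(\tfrac{\sP^\alpha}{\sR^{\alpha-\gamma}}\big)^{1/\gamma} = \tfrac{\sP^{\alpha/\gamma}}{\sR^{\alpha/\gamma - 1}}$ since $\tfrac{\alpha}{\gamma} - 1 = \tfrac{\alpha-\gamma}{\gamma}$; and because this last identity also shows the outer exponent $\gamma$ equals $\tfrac{\alpha-\gamma}{\alpha/\gamma - 1}$, the first factor is exactly $\big[\sfd_{\alpha/\gamma}(\sP \parallel \sR)\big]^{\alpha-\gamma}$. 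For the second factor, setting $\beta := \tfrac{\alpha-\gamma}{1-\gamma}$ one checks $\beta - 1 = \tfrac{\alpha-1}{1-\gamma}$, hence $\big(\tfrac{\sR^{\alpha-\gamma}}{\sQ^{\alpha-1}}\big)^{1/(1-\gamma)} = \tfrac{\sR^{\beta}}{\sQ^{\beta-1}}$ and $\big(\sum_{x,y} \tfrac{\sR^\beta}{\sQ^{\beta-1}}\big)^{1-\gamma} = \big[\sfd_\beta(\sR \parallel \sQ)\big]^{(\beta-1)(1-\gamma)} = \big[\sfd_\beta(\sR \parallel \sQ)\big]^{\alpha-1}$. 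Combining the two factors yields exactly the stated bound. The hypothesis $\alpha > \gamma$ enters to guarantee $\tfrac{\alpha}{\gamma} > 1$, so that $\sfd_{\alpha/\gamma}$ is well-defined and the exponent manipulations have the intended signs.

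\textbf{Main obstacle.} The computation above is essentially mechanical; the only genuinely creative step is identifying the interpolation exponent $\alpha - \gamma$ on $\sR$ — equivalently, the conjugate pair $\tfrac{1}{\gamma}, \tfrac{1}{1-\gamma}$ — which is in fact forced once one insists that the first H\"older factor reproduce a $\sfd_{\alpha/\gamma}(\sP \parallel \sR)$ term and the second a $\sfd_{\cdot}(\sR \parallel \sQ)$ term. The one point requiring a little care is the treatment of the support: if $\sR(x,y) = 0$ at some point where $\sP(x,y) > 0$, then $\sfd_{\alpha/\gamma}(\sP \parallel \sR) = +\infty$ and the right-hand side is infinite, so the inequality holds trivially; otherwise the factorization is valid on $\{\sP(x,y)>0\}$, which carries the entire sum on the left. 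I expect no substantive difficulty beyond this bookkeeping (and the harmless $\alpha = 1$ boundary case, handled by the usual limiting conventions and $\sfd_\cdot \geq 1$).
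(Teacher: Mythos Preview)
Your proof is correct and is the natural H\"older-inequality argument: factor $\sP^\alpha/\sQ^{\alpha-1}$ through $\sR^{\alpha-\gamma}$ and apply H\"older with conjugate exponents $1/\gamma$ and $1/(1-\gamma)$, then identify the two resulting sums as the appropriate $\sfd$-powers. This is exactly the approach the paper takes (the proof is deferred to the full version, but it is the same one-line H\"older computation you wrote out), so there is nothing to add.
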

This result
is used in combination with Theorem~\ref{th:nestimate}
to establish the following.

\begin{restatable}{theorem}{GuaranteeTheorem}
\label{th:guarantee}
For any $\delta > 0$, there exists $z \in \Delta$ such
that the following inequality holds for any $\alpha > 1$
and arbitrary target distribution $\sD_T$:
\begin{align*}
\cL(\sD_T, \h g_z) 
\leq [(\h \e + \delta) \, \h \sfd']^{\frac{\alpha-1}{\alpha}} 
[\sfd_{{2\alpha}}(\sD_T \parallel \cD)]^{\frac{2\alpha-1}{2\alpha}} M^{\frac{1}{\alpha}},
\end{align*}
where 
$\h \e =  (\e  \h \sfd)^{\frac{\alpha - 1}{\alpha}}
M^{\frac{1}{\alpha }}$, 
$\h \sfd = \max_{k\in[p]} 
\sfd_\alpha(\h \sD_k \parallel \sD_k)$, and
$\h \sfd' = \max_{k\in[p]} \sfd_{{2\alpha-1}}
(\sD_{k} \parallel \h \sD_{k})$, 
with $\h \sD_k = \frac{\h \sQ(k | x) \sD(x)}{\h \sQ(k)}$.
\end{restatable}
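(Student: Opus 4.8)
The plan is to keep the parameter $z\in\Delta$ produced by Theorem~\ref{th:nestimate-bis} (the \dmsa\ analogue of Theorem~\ref{th:nestimate}), which already guarantees
\[
\cL(\sD_T,\h g_z)\le\big[(\h\e+\delta)\,\sfd_\alpha(\sD_T\parallel\h\cD)\big]^{\frac{\alpha-1}{\alpha}}M^{\frac1\alpha}
\]
for every $\alpha>1$ and every target $\sD_T$, with $\h\e=\max_k[\e\,\sfd_\alpha(\h\sD_k\parallel\sD_k)]^{\frac{\alpha-1}{\alpha}}M^{\frac1\alpha}$; since $t\mapsto(\e t)^{\frac{\alpha-1}{\alpha}}M^{\frac1\alpha}$ is increasing, this coincides with the $\h\e=(\e\,\h\sfd)^{\frac{\alpha-1}{\alpha}}M^{\frac1\alpha}$ of the statement for $\h\sfd=\max_k\sfd_\alpha(\h\sD_k\parallel\sD_k)$. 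It thus remains only to bound the divergence factor $\sfd_\alpha(\sD_T\parallel\h\cD)$ in terms of $\sfd_{2\alpha}(\sD_T\parallel\cD)$ and of the estimation errors $\sfd_{2\alpha-1}(\sD_k\parallel\h\sD_k)$.

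For $\lambda\in\Delta$ write $\sD_\lambda=\sum_k\lambda_k\sD_k\in\cD$ and $\h\sD_\lambda=\sum_k\lambda_k\h\sD_k\in\h\cD$. Applying Proposition~\ref{prop:tri_ineq} with $\sP=\sD_T$, $\sR=\sD_\lambda$, $\sQ=\h\sD_\lambda$ and $\gamma=\tfrac12$ (valid since $\gamma\in(0,1)$ and $\alpha>1>\gamma$), and using $\tfrac{\alpha}{\gamma}=2\alpha$, $\tfrac{\alpha-\gamma}{1-\gamma}=2\alpha-1$, gives
\[
\big[\sfd_\alpha(\sD_T\parallel\h\sD_\lambda)\big]^{\alpha-1}\le\big[\sfd_{2\alpha}(\sD_T\parallel\sD_\lambda)\big]^{\alpha-\frac12}\big[\sfd_{2\alpha-1}(\sD_\lambda\parallel\h\sD_\lambda)\big]^{\alpha-1}.
\]
To control the last factor I will use a mixture lemma: for $\beta>1$ and $\lambda\in\Delta$, $\sfd_\beta\big(\sum_k\lambda_k P_k\parallel\sum_k\lambda_k Q_k\big)\le\max_k\sfd_\beta(P_k\parallel Q_k)$. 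This follows from the joint convexity on the positive orthant of $(p,q)\mapsto p^\beta q^{1-\beta}=q\,(p/q)^\beta$, which is the perspective of the convex function $t\mapsto t^\beta$; applying this convexity pointwise to the convex combination $\sum_k\lambda_k(P_k(x,y),Q_k(x,y))$ and summing over $\sX\times\sY$ yields $\sfd_\beta(\sum_k\lambda_kP_k\parallel\sum_k\lambda_kQ_k)^{\beta-1}\le\sum_k\lambda_k\sfd_\beta(P_k\parallel Q_k)^{\beta-1}\le\max_k\sfd_\beta(P_k\parallel Q_k)^{\beta-1}$, and a $(\beta-1)$-th root finishes it. Taking $\beta=2\alpha-1>1$ bounds the last factor above by $(\h\sfd')^{\alpha-1}$ with $\h\sfd'=\max_k\sfd_{2\alpha-1}(\sD_k\parallel\h\sD_k)$.

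Now let $\lambda^\star$ minimize $\lambda\mapsto\sfd_{2\alpha}(\sD_T\parallel\sD_\lambda)$ over the compact simplex $\Delta$, so that $\sfd_{2\alpha}(\sD_T\parallel\sD_{\lambda^\star})=\sfd_{2\alpha}(\sD_T\parallel\cD)$; since $\h\sD_{\lambda^\star}\in\h\cD$ we have $\sfd_\alpha(\sD_T\parallel\h\cD)\le\sfd_\alpha(\sD_T\parallel\h\sD_{\lambda^\star})$. Combining the previous two displays at $\lambda=\lambda^\star$ and taking $(\alpha-1)$-th roots gives $\sfd_\alpha(\sD_T\parallel\h\cD)\le[\sfd_{2\alpha}(\sD_T\parallel\cD)]^{\frac{2\alpha-1}{2(\alpha-1)}}\,\h\sfd'$. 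Substituting this into the bound of Theorem~\ref{th:nestimate-bis} and simplifying the exponent $\frac{2\alpha-1}{2(\alpha-1)}\cdot\frac{\alpha-1}{\alpha}=\frac{2\alpha-1}{2\alpha}$ produces exactly the claimed inequality. I expect the only genuinely substantive step to be the mixture lemma — recognizing $(p,q)\mapsto p^\beta q^{1-\beta}$ as a perspective and hence jointly convex — together with the bookkeeping that pins $\gamma=\tfrac12$ as the choice making the divergence orders land on $2\alpha$ and $2\alpha-1$; the rest is substitution and monotonicity of the maps involved.
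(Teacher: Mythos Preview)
Your proof is correct and follows the route the paper itself signals: combine the bound of Theorem~\ref{th:nestimate-bis} with the R\'enyi ``triangle inequality'' of Proposition~\ref{prop:tri_ineq}, specialized at $\gamma=\tfrac12$ so that the orders land on $2\alpha$ and $2\alpha-1$, and then pass from the mixture divergence $\sfd_{2\alpha-1}(\sD_\lambda\parallel\h\sD_\lambda)$ to $\max_k\sfd_{2\alpha-1}(\sD_k\parallel\h\sD_k)$. The exponent bookkeeping and the final substitution are all right, and your justification of the mixture step via the joint convexity of the perspective $(p,q)\mapsto q\,(p/q)^\beta$ is clean and exactly what is needed.
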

 The theorem holds
similarly for \gmsa\ with $\h \sD_k$ a direct estimate of $\sD_k$.  This
provides a strong performance guarantee for \gmsa\ or \dmsa\ when the
target distribution $\sD_T$ is close to the family of mixtures of the
source distributions $\sD_k$, and when $\h \sD_k$ is a good estimate
of $\sD_k$.

\subsection{Conditional Maxent}
\label{sec:maxent}

The distribution $\sD = \frac{1}{p} \sum_{k = 1}^p \sD_k$ over
$\sX \times \sY$ naturally induces the distribution $\sQ$ over
$\sX \times [p]$ defined for all $(x, k)$ by
$\sQ(x, k) = \frac{1}{p} D_k(x)$. Let
$S = ((x_1, k_1), \ldots, (x_m, k_m))$ be a sample of $m$ 
labeled points drawn i.i.d.\! from $\sQ$. 

Let $\Phi\colon \sX \times [p] \to \Rset^N$ be a feature
mapping with bounded norm, $\| \Phi \| \leq r$, for some
$r > 0$.
Then, the
optimization problem defining the solution of conditional
Maxent (or multinomial logistic regression) with the feature
mapping $\Phi$ is given by
\begin{equation}
\label{eq:LogisticRegression}
\min_{w \in \Rset^N} \mu \| w \|^2 
- \frac{1}{m} \sum_{i = 1}^m \log \sfp_w [k_i |x_i],
\end{equation}
where $\sfp_w$ is defined by
$\sfp_w[k | x] = \frac{1}{Z(x)} \exp(w \cdot \Phi(x, k))$, with
$Z(x) = \sum_{k \in [p]} \exp(w \cdot \Phi(x, k))$,
and where $\mu \geq 0$ is a regularization parameter.
Then, conditional Maxent benefits from the
following theoretical guarantee.

\begin{restatable}{theorem}{LogisticRegressionTheorem}
\label{th:LogisticRegression}
Let $\h w$ be the solution of problem \eqref{eq:LogisticRegression} 
and $w^*$ the population solution of the conditional
Maxent optimization problem:
\begin{align*}
  w^* = \argmin_{w\in\Rset^N} \, \mu \lVert w \rVert^2 
  - \E_{(x, k) \sim \sQ} \bracket[\big]{\log \sfp_w [k | x]}.
\end{align*}
Then, for any $\delta > 0$, with probability at least $1 - \delta$, 
for any $(x, k) \in \sX \times [p]$, the following inequality holds:
\begin{equation*}
\abs[\Big]{ \log \sfp_{\h w}[k | x] - \log \sfp_{w^*}[k | x] }
\leq \frac{2\sqrt{2} \mspace{1mu} r^2}{\mu\sqrt{m}} 
\bracket*{1 + \sqrt{\log (1/\delta)}}.
\end{equation*}
\end{restatable}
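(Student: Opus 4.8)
The plan is to reduce the statement to a bound on the \emph{parameter} distance $\|\h w - w^*\|$, and to obtain the latter from a single vector–concentration inequality by comparing the empirical and population gradients \emph{at the population optimum} $w^*$. First, note that $w \mapsto \log \sfp_w[k | x] = w \cdot \Phi(x,k) - \log \sum_{k' \in [p]} \exp(w \cdot \Phi(x,k'))$ has gradient $\Phi(x,k) - \E_{k' \sim \sfp_w[\cdot | x]}[\Phi(x,k')]$, whose norm is at most $2r$ since $\|\Phi\| \le r$. Hence $w \mapsto \log \sfp_w[k|x]$ is $2r$-Lipschitz \emph{uniformly over} $(x,k) \in \sX \times [p]$, and it suffices to show that, with probability at least $1-\delta$, $\|\h w - w^*\| \le \frac{\sqrt{2}\, r}{\mu \sqrt{m}}\bigl(1 + \sqrt{\log(1/\delta)}\bigr)$, after which the claimed inequality follows for every $(x,k)$ simultaneously by multiplying by $2r$.

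For the parameter bound, write $\h J(w) = \mu\|w\|^2 - \frac{1}{m}\sum_{i=1}^m \log \sfp_w[k_i | x_i]$ and $J(w) = \mu\|w\|^2 - \E_{(x,k)\sim\sQ}[\log \sfp_w[k|x]]$ for the empirical and population regularized objectives; both are $2\mu$-strongly convex (from the $\mu\|w\|^2$ term) and smooth, so $\h w$ and $w^*$ are their unique minimizers, with $\nabla \h J(\h w) = 0$ and $\nabla J(w^*) = 0$. Since $\h J$ is $2\mu$-strongly convex and $\h w$ is its minimizer, $\|\nabla \h J(w^*)\| = \|\nabla \h J(w^*) - \nabla \h J(\h w)\| \ge 2\mu\,\|\h w - w^*\|$, so $\|\h w - w^*\| \le \frac{1}{2\mu}\|\nabla \h J(w^*)\|$. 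Now, crucially, using $\nabla J(w^*) = 0$,
\begin{equation*}
\nabla \h J(w^*) = \nabla \h J(w^*) - \nabla J(w^*) = \frac{1}{m}\sum_{i=1}^m v_i - \E[v],
\end{equation*}
where $v_i := \E_{k' \sim \sfp_{w^*}[\cdot | x_i]}[\Phi(x_i,k')] - \Phi(x_i,k_i)$ are i.i.d.\ with $\|v_i\| \le 2r$. Thus $\|\h w - w^*\|$ is controlled by the deviation of an empirical mean of bounded i.i.d.\ vectors from its expectation, and no uniform convergence over $w \in \Rset^N$ is required.

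It then remains to bound $\bigl\| \frac{1}{m}\sum_{i=1}^m v_i - \E[v] \bigr\|$ with high probability. Its expectation is at most the square root of its second moment, $\bigl(\frac{1}{m}(\E\|v\|^2 - \|\E v\|^2)\bigr)^{1/2} \le \frac{2r}{\sqrt{m}}$. Viewed as a function of the sample $S$, replacing one pair $(x_i,k_i)$ changes this quantity by at most $\frac{1}{m}(\|v_i\| + \|v_i'\|) \le \frac{4r}{m}$, so McDiarmid's inequality gives, with probability at least $1 - \delta$, $\bigl\| \frac{1}{m}\sum_i v_i - \E[v] \bigr\| \le \frac{2r}{\sqrt m} + \sqrt{\frac{8 r^2 \log(1/\delta)}{m}}$. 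Dividing by $2\mu$ and multiplying by $2r$, and using that $2 + 2\sqrt{2}\,\sqrt{\log(1/\delta)} \le 2\sqrt{2}\,\bigl(1 + \sqrt{\log(1/\delta)}\bigr)$, yields exactly the stated bound.

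The main obstacle — and the one idea that keeps the bound dimension-free and free of any $\sqrt{\log p}$ factor — is the decision to evaluate gradients at $w^*$ rather than at $\h w$: bounding $\|\h w - w^*\|$ through $\|\nabla J(\h w)\| = \|\nabla J(\h w) - \nabla \h J(\h w)\|$ would instead require a uniform bound on $\sup_w \|\nabla J(w) - \nabla \h J(w)\|$, i.e.\ a Rademacher-complexity argument over the whole parameter space, which is both lossier and introduces spurious dependencies. The rest is routine: verifying the $2r$ gradient-norm/Lipschitz bounds, checking that $\mu > 0$ makes the two regularized problems strongly convex so that $\h w$ and $w^*$ are well defined with vanishing gradients, and tracking constants through the final McDiarmid step.
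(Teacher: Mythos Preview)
Your proof is correct. The Lipschitz reduction via the $2r$ gradient bound on $w\mapsto\log\sfp_w[k|x]$, the strong-convexity inequality $\|\h w-w^*\|\le\frac{1}{2\mu}\|\nabla\h J(w^*)\|$, the identification of $\nabla\h J(w^*)$ with a centered empirical mean of i.i.d.\ vectors bounded by $2r$ (crucially using that $w^*$ is sample-independent), and the McDiarmid step all check out, and the constants land exactly on the stated $\frac{2\sqrt{2}\,r^2}{\mu\sqrt m}\bigl(1+\sqrt{\log(1/\delta)}\bigr)$.

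The paper itself defers this proof to its full version, so a line-by-line comparison is not possible from the text provided. That said, the exact match of constants strongly suggests the authors use the same argument: Lipschitzness plus the ``evaluate the empirical gradient at the population optimum'' trick, which is the standard route for dimension-free $O(1/\sqrt m)$ parameter bounds in strongly-convex stochastic optimization. Your closing remark correctly identifies why this is the right move: evaluating at $\h w$ instead would force a uniform-deviation argument and typically lose the clean constant.
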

The theorem shows that the pointwise log-loss of the
conditional Maxent solution $\sfp_{\h w}$ is close to that of the
best-in-class $\sfp_{w^*}$ modulo a term in
$O(1/\sqrt{m})$ that does not depend on the dimension 
of the feature space.

\subsection{Comparison of the Guarantees for \dmsa\ and \gmsa}
\label{sec:compare}

We now use Theorem~\ref{th:guarantee} and
the bound of Theorem~\ref{th:LogisticRegression} to give a theoretical guarantee
for \dmsa\ used with conditional Maxent. We  show that it is more favorable than a guarantee
for \gmsa\ using kernel density estimation.

\begin{restatable}[\dmsa]{theorem}{GZTheorem}
\label{th:gz}
There exists $z \in \Delta$ such that for any $\delta > 0$, with
probability at least $1 - \delta$ the following inequality holds
\dmsa\ used with conditional Maxent, for an arbitrary target mixture
$\sD_T$:
\begin{align*}
\cL(\sD_T, \h g_z)
& \leq \e \, p \, e^{\frac{6\sqrt{2}r^2}{\mu\sqrt{m}} 
\bracket*{ 1 + \sqrt{\log (1/\delta)} }} 
\, \sfd^* \, \sfd'^*,\\[.2cm]
\text{with} \quad \sfd^* & = \sup_{x \in \sX} \sfd_\infty
\left(\sQ^*[\cdot | x] \parallel \sQ(\cdot | x) \right),\\
\sfd'^* & = \sup_{x \in \sX} \sfd^2_\infty
\left( \sQ(\cdot | x) \parallel \sQ^*[\cdot | x] \right),
\end{align*}
where $\sQ^*(\cdot | x) = \sfp_{w^*}[\cdot | x]$
is the population solution of 
conditional Maxent problem (statement
of Theorem~\ref{th:LogisticRegression}).
\end{restatable}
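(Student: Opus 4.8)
The plan is to combine the general guarantee of Theorem~\ref{th:guarantee} with the uniform estimation bound of Theorem~\ref{th:LogisticRegression}; Theorem~\ref{th:guarantee} already has the right shape (a factor comparing $\h\sD_k$ to $\sD_k$ and a factor comparing $\sD_k$ to $\h\sD_k$), so the work is to push it to the $\infty$-R\'enyi regime, convert the resulting density-ratio suprema into suprema of ratios of the conditional domain probabilities $\h\sQ(\cdot\mid x)=\sfp_{\h w}[\cdot\mid x]$ and $\sQ(\cdot\mid x)$ routed through the population solution $\sQ^*(\cdot\mid x)=\sfp_{w^*}[\cdot\mid x]$, and then invoke Theorem~\ref{th:LogisticRegression}.

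First I would let $\alpha\to+\infty$ in Theorem~\ref{th:guarantee}. Since $\sD_T$ is a mixture, $\sD_T\in\cD$ and $\sfd_{2\alpha}(\sD_T\parallel\cD)=1$; moreover $M^{1/\alpha}\to1$, the exponents $\tfrac{\alpha-1}{\alpha},\tfrac{2\alpha-1}{2\alpha}\to1$, and by monotonicity of R\'enyi divergences in the order, $\sfd_\alpha(\h\sD_k\parallel\sD_k)\to\sfd_\infty(\h\sD_k\parallel\sD_k)$ and $\sfd_{2\alpha-1}(\sD_k\parallel\h\sD_k)\to\sfd_\infty(\sD_k\parallel\h\sD_k)$. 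Taking $\alpha\to+\infty$, then the auxiliary $\delta$ of Theorem~\ref{th:guarantee} to $0$ (compactness of $\Delta$ and continuity of $z\mapsto\cL(\sD_T,\h g_z)$, which holds as $\h\sQ(k\mid x)>0$ and $\ell$ is continuous and bounded), furnishes a $z\in\Delta$ (a measurable function of the sample) with
\[
\cL(\sD_T,\h g_z)\;\le\;\e\,\Big(\max_{k\in[p]}\sfd_\infty(\h\sD_k\parallel\sD_k)\Big)\Big(\max_{k\in[p]}\sfd_\infty(\sD_k\parallel\h\sD_k)\Big).
\]

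Next, using $\h\sD_k(x,y)=\h\sQ(k\mid x)\sD(x,y)/\h\sQ(k)$ with $\h\sQ(k)=\sum_x\h\sQ(k\mid x)\sD(x)$, $\sD_k(x,y)=\sQ(k\mid x)\sD(x,y)/\sQ(k)$ with $\sQ(k)=1/p$, the shared conditional $\sD(y\mid x)$, and $\sfd_\infty(\sP\parallel\sR)=\sup\sP/\sR$, the two divergences become $\sfd_\infty(\h\sD_k\parallel\sD_k)=\tfrac{1}{p\,\h\sQ(k)}\sup_x\tfrac{\h\sQ(k\mid x)}{\sQ(k\mid x)}$ and $\sfd_\infty(\sD_k\parallel\h\sD_k)=p\,\h\sQ(k)\sup_x\tfrac{\sQ(k\mid x)}{\h\sQ(k\mid x)}$. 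On the probability-$(1-\delta)$ event of Theorem~\ref{th:LogisticRegression}, $e^{-\beta}\le\h\sQ(k\mid x)/\sQ^*(k\mid x)\le e^{\beta}$ for all $(x,k)$ with $\beta=\tfrac{2\sqrt2\,r^2}{\mu\sqrt m}(1+\sqrt{\log(1/\delta)})$; inserting $\sQ^*$ and using $\sfd^*=\sup_x\max_k\sQ^*(k\mid x)/\sQ(k\mid x)$ and $\sqrt{\sfd'^*}=\sup_x\max_k\sQ(k\mid x)/\sQ^*(k\mid x)$ gives $\sup_x\h\sQ(k\mid x)/\sQ(k\mid x)\le e^{\beta}\sfd^*$, $\sup_x\sQ(k\mid x)/\h\sQ(k\mid x)\le e^{\beta}\sqrt{\sfd'^*}$, the lower bound $\h\sQ(k)\ge e^{-\beta}/(p\sqrt{\sfd'^*})$ (whence $1/(p\,\h\sQ(k))\le e^{\beta}\sqrt{\sfd'^*}$), and the trivial $p\,\h\sQ(k)\le p$. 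These yield $\max_k\sfd_\infty(\h\sD_k\parallel\sD_k)\le e^{2\beta}\sfd^*\sqrt{\sfd'^*}$ and $\max_k\sfd_\infty(\sD_k\parallel\h\sD_k)\le p\,e^{\beta}\sqrt{\sfd'^*}$, and substituting into the display above gives $\cL(\sD_T,\h g_z)\le\e\,p\,e^{3\beta}\,\sfd^*\,\sfd'^*$, i.e.\ the claim, since $e^{3\beta}=e^{\frac{6\sqrt2\,r^2}{\mu\sqrt m}(1+\sqrt{\log(1/\delta)})}$.

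The crux is the last step's bookkeeping of the normalizers $\h\sQ(k)$: pairing the crude bound $p\,\h\sQ(k)\le p$ with the $\sfd_\infty(\sD_k\parallel\h\sD_k)$ factor but the sharp $p\,\h\sQ(k)\ge e^{-\beta}/\sqrt{\sfd'^*}$ with the $\sfd_\infty(\h\sD_k\parallel\sD_k)$ factor is precisely what keeps $\sfd^*$ to the first power, trading the would-be extra $\sfd^*$ for the benign factor $p$. A lesser, more technical point is the limiting argument in the first step, which must produce one $z\in\Delta$ valid for every $\alpha$ and independent of the auxiliary $\delta$ of Theorem~\ref{th:guarantee}.
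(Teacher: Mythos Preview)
Your proposal is correct and follows precisely the approach the paper signals: specialize Theorem~\ref{th:guarantee} to $\alpha\to\infty$ with $\sD_T\in\cD$, rewrite the resulting $\sfd_\infty$ factors via $\h\sD_k(x)/\sD_k(x)=\frac{1}{p\,\h\sQ(k)}\cdot\frac{\h\sQ(k\mid x)}{\sQ(k\mid x)}$, insert $\sQ^*$ and invoke Theorem~\ref{th:LogisticRegression}. Your bookkeeping of the normalizers $\h\sQ(k)$---pairing the trivial bound $p\,\h\sQ(k)\le p$ with one factor and the sharp lower bound with the other so that $\sfd^*$ appears only to the first power at the cost of the factor $p$---is exactly the mechanism that produces the stated constants, and your handling of the limit in the auxiliary $\delta$ via compactness of $\Delta$ is the standard way to extract a single $z$.
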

 The
theorem shows that the expected error of \dmsa\ with conditional
Maxent is close to $\e$ modulo a factor that varies as
$e^{1/\sqrt{m}}$, where $m$ is the size of the total unlabeled sample
received from all $p$ sources, and factors $\sQ^*$ and $\sQ'^*$ that
measure how closely conditional Maxent can approximate the true
conditional probabilities with infinite samples.

Next, we prove learning guarantees for \gmsa\ with densities estimated
via kernel density estimation (KDE). We assume that the same i.i.d.\
sample $S = ((x_1, k_1), \ldots, (x_m, k_m))$ as with conditional
Maxent is used. Here, the points labeled with $k$ are used for
estimating $\sD_k$ via KDE. Since the sample is drawn from $\sQ$ with
$\sQ(x, k) = \frac{1}{p} \sD_k$, the number of samples points $m_k$
labeled with $k$ is very close to $\frac{m}{p}$.  $\h \sD_k$ is learned
from $m_k$ samples, via KDE with a normalized kernel function
$K_\sigma(\cdot, \cdot)$ that satisfies
$\int_{x\in\sX} K_\sigma(x, x') \, dx = 1$ for all $x' \in \sX$.

\begin{restatable}[\texttt{GMSA}]{theorem}{SimpleEstimateHZTheorem}
\label{th:simple_estimate_hz}
There exists $z \in \Delta$ such that, for any $\delta > 0$, with
probability at least $1 - \delta$ the following inequality holds for
\gmsa\ used KDE, for an arbitrary target mixture $\sD_T$:
\begin{align*}
\cL(\sD_T, \h h_z) \leq
\e^{\frac{1}{4}} M^{\frac{3}{4}}
e^{\frac{6 \kappa}{\sqrt{2(m/p)}} \sqrt{\log p + \log (1/\delta)}}
\sfd^*
\sfd'^*,
\end{align*}
with $\kappa = \max_{x, x', x'' \in \sX} 
\frac{K_\sigma(x, x')}{K_\sigma(x, x'')}$, and 
\begin{align*}
\sfd^* & = \max_{k \in [p]} \E_{x \sim \sD_k} [
\sfd_{+\infty}\big(K_\sigma(\cdot, x) \parallel \sD_k\big) ], \\
\sfd'^* & = \max_{k \in [p]} \E_{x \sim \sD_k} [
\sfd_{+\infty}\big(\sD_k \parallel K_\sigma(\cdot, x) \big)].
\end{align*}
\end{restatable}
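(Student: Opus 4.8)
The plan is to feed the kernel density estimate into the general guarantee of Theorem~\ref{th:guarantee} — which, as noted there, applies to \gmsa\ with $\h\sD_k$ a direct estimate of $\sD_k$ — instantiated at $\alpha=2$. Since the target is a mixture of the \emph{true} sources, $\sD_T\in\cD$, so $\sfd_{2\alpha}(\sD_T\parallel\cD)=\sfd_4(\sD_T\parallel\cD)=1$; letting the optimization slack in Theorem~\ref{th:guarantee} tend to $0$ (by compactness of $\Delta$ and continuity of $z\mapsto\cL(\sD_T,\h h_z)$) collapses that bound to
\[
\cL(\sD_T,\h h_z)\;\le\;\e^{1/4}\,M^{3/4}\;\h\sfd^{\,1/4}\,\h\sfd'^{\,1/2},
\]
where $\h\sfd=\max_k\sfd_2(\h\sD_k\parallel\sD_k)$, $\h\sfd'=\max_k\sfd_3(\sD_k\parallel\h\sD_k)$, and $\h\sD_k(x)=\tfrac1{m_k}\sum_{i:k_i=k}K_\sigma(x,x_i)$ is the KDE from the $m_k$ points labeled $k$. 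As $\sfd_2(\h\sD_k\parallel\sD_k)\ge1$, $\sfd_3(\sD_k\parallel\h\sD_k)\ge1$ and $\sfd^*,\sfd'^*\ge1$, it suffices to prove $\h\sfd\le e^{\gamma}\sfd^*$ and $\h\sfd'\le e^{\gamma}\sfd'^*$ with $\gamma$ of order $\tfrac{\kappa}{\sqrt{m/p}}\sqrt{\log p+\log(1/\delta)}$ (in any case below the exponent in the statement): then $\h\sfd^{1/4}\h\sfd'^{1/2}\le e^{\gamma}\sfd^*\sfd'^*$.

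For $\h\sfd$, Jensen applied to $t\mapsto t^2$ inside the kernel average gives $\sfd_2(\h\sD_k\parallel\sD_k)=\int\h\sD_k(x)^2/\sD_k(x)\,dx\le\tfrac1{m_k}\sum_i\sfd_2(K_\sigma(\cdot,x_i)\parallel\sD_k)\le\tfrac1{m_k}\sum_i\sfd_\infty(K_\sigma(\cdot,x_i)\parallel\sD_k)$, an empirical average of i.i.d.\ terms with mean at most $\sfd^*$. The definition of $\kappa$ bounds each term: $K_\sigma(x,x_i)\le\kappa K_\sigma(x,x')$ for every $x'$, hence $K_\sigma(x,x_i)\le\kappa\,\ov\sD_k(x)$ with $\ov\sD_k:=K_\sigma*\sD_k$ (a genuine density since $\int_\sX K_\sigma(x,x')\,dx=1$), so $\sfd_\infty(K_\sigma(\cdot,x_i)\parallel\sD_k)\le\kappa\,\sfd_\infty(\ov\sD_k\parallel\sD_k)\le\kappa\,\sfd^*$, the last step by pulling the supremum over $x$ inside the expectation over $x'$; Hoeffding and a union bound over the $p$ sources then give $\h\sfd\le e^{\gamma}\sfd^*$. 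The term $\h\sfd'$ is the delicate one because the estimate sits in the denominator: a second use of Jensen would produce $\tfrac1{m_k}\sum_i\sfd_\infty^2(\sD_k\parallel K_\sigma(\cdot,x_i))$, a second-moment quantity strictly larger than the first-moment $\sfd'^*$ we are allowed. Instead, use the AM--HM inequality $1/\h\sD_k(x)\le\tfrac1{m_k}\sum_i1/K_\sigma(x,x_i)$ and then bound each reciprocal kernel ratio \emph{uniformly in $x$}: $\sD_k(x)/K_\sigma(x,x_i)\le\sfd_\infty(\sD_k\parallel K_\sigma(\cdot,x_i))$. This gives $\sD_k(x)/\h\sD_k(x)\le B_k:=\tfrac1{m_k}\sum_i\sfd_\infty(\sD_k\parallel K_\sigma(\cdot,x_i))$, which is free of $x$, so $\sfd_3(\sD_k\parallel\h\sD_k)=\big[\int\sD_k(x)\,(\sD_k(x)/\h\sD_k(x))^2\,dx\big]^{1/2}\le B_k$; again $\kappa$ makes the summands bounded ($\sfd_\infty(\sD_k\parallel K_\sigma(\cdot,x_i))\le\kappa\,\sfd'^*$) with mean $\le\sfd'^*$, and Hoeffding plus a union bound give $\h\sfd'\le e^{\gamma}\sfd'^*$.

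Combining the displayed inequality with $\h\sfd^{1/4}\h\sfd'^{1/2}\le e^{\gamma}\sfd^*\sfd'^*$ yields $\cL(\sD_T,\h h_z)\le\e^{1/4}M^{3/4}e^{\gamma}\sfd^*\sfd'^*$, and the $\gamma$ coming out of the two Hoeffding bounds — after a routine Chernoff-plus-union-bound step replacing the random counts $m_k$ by $m/p$ — is comfortably below $\tfrac{6\kappa}{\sqrt{2(m/p)}}\sqrt{\log p+\log(1/\delta)}$, which proves the theorem. The step I expect to be the main obstacle is exactly the control of $\h\sfd'=\max_k\sfd_3(\sD_k\parallel\h\sD_k)$ with the estimate in the denominator: the obvious route would be a uniform multiplicative concentration $e^{-\gamma}\ov\sD_k(x)\le\h\sD_k(x)\le e^{\gamma}\ov\sD_k(x)$ holding for \emph{all} $x$, but that is precisely where a clean, dimension- and complexity-free bound is hard to come by, since there is no finite index set to union-bound over and one would have to control the KDE deviation uniformly over the family $\{x'\mapsto K_\sigma(x,x'):x\in\sX\}$. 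The AM--HM maneuver sidesteps this, reducing everything to the concentration of scalar empirical averages whose boundedness is guaranteed by $\kappa<\infty$; once that is in place, the remaining ingredients — the choice $\alpha=2$, the vanishing of $\sfd_{2\alpha}(\sD_T\parallel\cD)$, and the monotonicity-in-$\alpha$ and Jensen inequalities relating kernel divergences to $\sfd^*,\sfd'^*$ — are routine.
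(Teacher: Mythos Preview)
Your approach is correct and follows precisely the route the paper sets up: instantiate the general guarantee of \pref{th:guarantee} at $\alpha=2$ (which forces the $\e^{1/4}M^{3/4}$ shape), use $\sD_T\in\cD$ to kill the target-divergence factor, pass to the limit in the optimization slack by compactness, and then control $\h\sfd=\max_k\sfd_2(\h\sD_k\parallel\sD_k)$ and $\h\sfd'=\max_k\sfd_3(\sD_k\parallel\h\sD_k)$ by concentration. The paper defers its own proof to the full version, so a line-by-line comparison is not possible from the source provided, but your argument is exactly what the paper's scaffolding (\pref{th:guarantee} stated to hold ``similarly for \gmsa\ with $\h\sD_k$ a direct estimate of $\sD_k$'') is designed to plug into.

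Two remarks on the concentration steps, both of which you handle correctly but which are worth making explicit. First, your Jensen bound $\sfd_2(\h\sD_k\parallel\sD_k)\le\tfrac{1}{m_k}\sum_i\sfd_2(K_\sigma(\cdot,x_i)\parallel\sD_k)$ and the subsequent boundedness via $K_\sigma(x,x_i)\le\kappa\,\ov\sD_k(x)$, $\sfd_\infty(\ov\sD_k\parallel\sD_k)\le\sfd^*$ (swapping $\sup_x$ and $\E_{x'}$ in the right direction) are clean and give Hoeffding with range $\kappa\sfd^*$. Second, your AM--HM maneuver for $\h\sfd'$ is the right idea and indeed avoids the uniform-in-$x$ multiplicative concentration you flag as the obstacle: the resulting bound $\sfd_3(\sD_k\parallel\h\sD_k)\le B_k=\tfrac{1}{m_k}\sum_i\sfd_\infty(\sD_k\parallel K_\sigma(\cdot,x_i))$ is a scalar empirical mean with range $\kappa\sfd'^*$ (via $K_\sigma(x,x_i)\ge\ov\sD_k(x)/\kappa$ and $\sfd_\infty(\sD_k\parallel\ov\sD_k)\le\sfd'^*$ by Jensen on $t\mapsto1/t$), so Hoeffding applies directly. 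After the union bound over $k$, the replacement of $m_k$ by $m/p$ via a multiplicative Chernoff step, and the trivial $(\sfd^*)^{1/4}(\sfd'^*)^{1/2}\le\sfd^*\sfd'^*$, the resulting exponent is of order $\tfrac{\kappa}{\sqrt{m/p}}\sqrt{\log p+\log(1/\delta)}$ with a constant well below $6/\sqrt{2}$, as you claim.
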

In comparison with the guarantee for \dmsa, the bound for \gmsa\
admits a worse dependency on $\e$. Furthermore, while the dependency
of the learning bound of \dmsa\ on the sample size is of the form
$O(e^{1/\sqrt{m}})$ and thus decreases as a function of the full
sample size $m$, that of \gmsa\ is of the form $O(e^{1/\sqrt{m/p}})$
and only decreases as a function of the per-domain sample size. This
further reflects the benefit of our discriminative solution since the
estimation of the conditional probabilities is based on conditional
Maxent trained on the full sample. 
Finally, the bound of \gmsa\ depends on $\kappa$, 
a ratio that can be unbounded for Gaussian kernels commonly used for KDE.

The generalization guarantees for \dmsa\ depends on two critical terms that
measure the divergence between the population solution of conditional
Maxent and the true domain classifier $\sQ(\cdot | x)$:
\begin{align*}
\sfd_{+\infty} \big(\sQ^*(\cdot | x) \parallel \sQ(\cdot | x) \big) 
\quad \text{and} \quad
\sfd_{+\infty} \big(\sQ(\cdot | x) \parallel \sQ^*(\cdot | x)\big).
\end{align*}
When the feature mapping for conditional Maxent is sufficiently
rich, for example when it is the reproducing kernel Hilbert space (RKHS)
associated to a Gaussian kernel, one can expect the two divergences to
be close to one. The generalization guarantees for
\gmsa\  also depend on two
divergence terms:
\begin{align*}
\sfd_{+\infty} \big(K_\sigma(\cdot, x)
\parallel \sD_k\big)
\quad \text{and} \quad
\sfd_{+\infty}\big(\sD_k \parallel K_\sigma(\cdot, x)\big).
\end{align*}
Compared to learning a domain classifier 
$\h \sQ(\cdot | x)$, it is more difficult to chose a good density kernel
$K_\sigma(\cdot, \cdot)$ to ensure that the divergence between marginal
distributions is small, which shows another benefit of
\dmsa.

The next section shows that, in addition to these theoretical
advantages, \dmsa\ also benefits from more favorable empirical 
results.

\begin{table*}[t]
\caption{MSE on the sentiment analysis dataset. Single source
baselines, \texttt{\small K}, \texttt{\small D}, \texttt{\small B}, \texttt{\small E}, 
the uniform combination
\unif, \gmsa, and \dmsa.} 
\vskip .0in
  \resizebox{1\linewidth}{!}{
  \begin{tabular}{l ccccccccccc}
  \toprule
  & \multicolumn{11}{c}{Sentiment Analysis Test Data}\\
  \cline{2-12}
  & \texttt{\small K}& \texttt{\small D} & \texttt{\small B}	& \texttt{\small E}	&\texttt{\small KD}	&\texttt{\small BE}	& \texttt{\small DBE} & \texttt{\small KBE} & \texttt{\small KDB} &	\texttt{\small KDB} &	\texttt{\small KDBE} \\
  \midrule

\texttt{K}  &   {1.42$\pm$0.10}  &   {2.20$\pm$0.15} &   {2.35$\pm$0.16} &   {1.67$\pm$0.12} &   {1.81$\pm$0.07} &   {2.01$\pm$0.10} &   {2.07$\pm$0.08}  &   {1.81$\pm$0.06} &   {1.76$\pm$0.06} &   {1.99$\pm$0.06} &   {1.91$\pm$0.05}\\
\texttt{D}  &   {2.09$\pm$0.08} &   {1.77$\pm$0.08} &   {2.13$\pm$0.10} &   {2.10$\pm$0.08} &   {1.93$\pm$0.07} &   {2.11$\pm$0.07} &   {2.00$\pm$0.06}  &   {2.11$\pm$0.06} &   {1.99$\pm$0.06} &   {2.00$\pm$0.06} &   {2.02$\pm$0.05}\\
\texttt{B}  &   {2.16$\pm$0.13} &   {1.98$\pm$0.10} &   {1.71$\pm$0.12} &   {2.21$\pm$0.07} &   {2.07$\pm$0.11} &   {1.96$\pm$0.07} &   {1.97$\pm$0.06}  &   {2.03$\pm$0.06} &   {2.12$\pm$0.07} &   {1.95$\pm$0.08} &   {2.02$\pm$0.06}\\
\texttt{E}  &   {1.65$\pm$0.09} &   {2.35$\pm$0.11} &   {2.45$\pm$0.14} &   {1.50$\pm$0.07} &   {2.00$\pm$0.09} &   {1.97$\pm$0.09} &   {2.10$\pm$0.08}  &   {1.86$\pm$0.05} &   {1.83$\pm$0.07} &   {2.15$\pm$0.07} &   {1.99$\pm$0.06}\\
\texttt{unif}   &   {1.50$\pm$0.06} &   {\bf1.75$\pm$0.09}  &   {1.79$\pm$0.10} &   {1.53$\pm$0.07} &   {1.63$\pm$0.06} &   {1.66$\pm$0.08} &   {1.69$\pm$0.06}  &   {1.61$\pm$0.05} &   {1.60$\pm$0.05} &   {1.68$\pm$0.05} &   {1.65$\pm$0.05}\\
\texttt{\gmsa} &   {1.42$\pm$0.10} &   {1.88$\pm$0.11} &   {1.80$\pm$0.10} &   {1.51$\pm$0.07} &   {1.65$\pm$0.08} &   {1.66$\pm$0.07}  &   {1.73$\pm$0.05}  &   {1.58$\pm$0.04}  &   {1.60$\pm$0.05} &   {1.70$\pm$0.04} &   {1.65$\pm$0.04}\\
  \texttt{\dmsa} (ours) &   {\bf1.42$\pm$0.08} &   {1.76$\pm$0.07} &   {\bf1.70$\pm$0.11}  &   {\bf1.46$\pm$0.07}  &   {\bf1.59$\pm$0.06}  &   {\bf1.58$\pm$0.07}   &   {\bf1.64$\pm$0.05}  &   {\bf1.53$\pm$0.04}  &   {\bf1.55$\pm$0.04}  &   {\bf1.63$\pm$0.04}  &   {\bf1.59$\pm$0.04}\\
\bottomrule
\end{tabular}
}
\label{table:sa}
\vskip -.15in
\end{table*}

\section{Experiments}
\label{sec:experiments}

We evaluated our \dmsa\ technique on the same
datasets as those used in \citep{hoffman2018algorithms}, as well
as with the UCI adult dataset.   Since
\cite{hoffman2018algorithms} has shown that \gmsa\ empirically
outperforms various alternative MSA solutions, in this section, we
mainly focus on demonstrating improvements over \gmsa\ under the same experimental
setups.

\textbf{Sentiment analysis.} 
To evaluate the \dmsa\ solution under the regression model, we used
the sentiment analysis dataset \citep{blitzer_acl07}, which consists
of product review text and rating labels taken from four domains:
\texttt{\small books} (B), \texttt{\small dvd} (D), \texttt{\small
  electronics} (E), and $\texttt{\small kitchen}$ (K), with
$2\mathord,000$ samples for each domain.
We adopted the same training procedure and hyper-parameters as those
used by \cite{hoffman2018algorithms} to obtain base predictors: first
define a vocabulary of $2\mathord,500$ words that occur at least twice
in each of the four domains, then use this vocabulary to define
word-count feature vectors for every review text, and finally train
base predictors for each domain using support vector regression.  We
use the same word-count features to train the domain classifier via
logistic regression.  We randomly split the $2\mathord,000$ samples
per domain into
$1\mathord,600$ train and $400$ test samples for each domain, and
learn the base predictors, domain classifier, density estimations, and
parameter $z$ for both MSA solutions on all available training samples.
We repeated the process 10 times, and report the mean and standard
deviation of the mean squared error on various target test mixtures in
Table~\ref{table:sa}.

We compared our technique, \dmsa, against each source predictor,
$h_k$, the uniform combination of the source predictors (\texttt{\small
  unif}), $\frac{1}{p}\sum_{k = 1}^p h_k$, and \gmsa\
with kernel density estimation.  Each column in Table~\ref{table:sa}
corresponds to a different target test mixture, as indicated by the
column name: four single domains, and uniform mixtures of two, three,
and four domains, respectively.  Our distribution-weighted method
\dmsa\ outperforms all baseline predictors across
almost all test domains.  {Observe that, even when the target is a
  single source domain, such as \texttt{K}, \texttt{B}, \texttt{E},
  our method can still outperform the predictor which is trained and
  tested on the same domain, showing the benefits of ensembles.}
Moreover, \dmsa\ improves upon \texttt{\small
  \gmsa} by a wide margin on all test mixtures, which demonstrates the
advantage of using a domain classifier over estimated densities in the
distribution-weighted combination.

\textbf{Recognition tasks with the cross-entropy loss.}
To evaluate the \dmsa\ solution under the probability model, we
considered a digit recognition task consists of three datasets: Google
Street View House Numbers (SVHN), MNIST, and USPS. 
For each individual domain, we trained a convolutional neural network
(CNN) with the same setup as in \cite{hoffman2018algorithms}, and used
the output from the softmax score layer as our base predictors
$h_k$. Furthermore, for every input image, we extracted the last layer
before softmax from each of the base networks and concatenated them to
obtain the feature vector for training the domain classifier. We used
the full training sets per domain to train the source model, and used
$6\mathord,000$ samples per domain to learn the domain classifier.
Finally, for our DC-programming algorithm, we used a $1\mathord,000$
image-label pairs from each domain, thus a total of $3\mathord,000$
labeled pairs to learn the parameter $z$.

We compared our method \dmsa\ against each source predictor ($h_k$),
the uniform combination, \unif,
a network jointly trained on all source data combined, \joint, and \gmsa\ with kernel density
estimation. Since the training and testing datasets are fixed, we
simply report the numbers from the original \gmsa\ paper. We evaluated
these baselines on each of the three test datasets, on combinations of
two test datasets, and on all test datasets combined. The results are
reported in Table~\ref{table:digits}.  Once again, \dmsa\ outperforms
all baselines on all test mixtures, and when the target is a single
test domain, \dmsa\ admits a comparable performance to the predictor
that is trained and tested on the same domain. And, as in the
sentiment analysis experiments, \dmsa\ outperforms
\gmsa\ by a wide margin on all test domains.

\begin{figure}[t]
    \centering
    \includegraphics[scale=0.4]{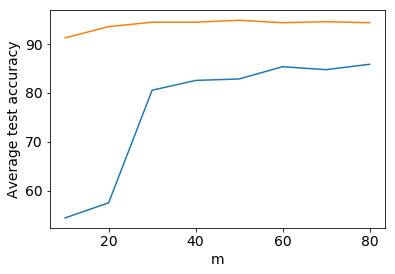}
    \vskip -.2in
    \caption{Average test accuracy of \gmsa\ (blue) and \dmsa\
      (orange) on the digits dataset as a function of the number of
      samples used in domain adaptation. }
    \label{fig:digits_num_samples}
\end{figure}

To illustrate the efficiency of \dmsa\, we further evaluated \dmsa\
and \gmsa\ on the digits dataset when only a small amount of data is available for domain
adaptation. We varied $m$, the number of samples per domain and
evaluated both \dmsa\ and \gmsa, see
Figure~\ref{fig:digits_num_samples}. As expected, 
\dmsa\ consistently outperforms \gmsa, thus matching our theoretical analysis that \dmsa\ can succeed with fewer samples. 

\begin{table}[t]
\centering
\caption{Digit Dataset Accuracy. 
\dmsa\ outperforms each single-source domain model, \unif, \joint, and most importantly \gmsa, on various target mixtures.  }
\vskip 0in
\begin{tabular}{l cccccccg}
\toprule
& \multicolumn{8}{c}{Digits Test Data}\\
\cline{2-8}
& \texttt{svhn}& \texttt{mnist}  & \texttt{usps} & \texttt{mu} & \texttt{su} & \texttt{sm} & \texttt{smu} & mean\\
\midrule
CNN-\texttt{s}  &  \bf {92.3} &   {66.9} &   {65.6} &   {66.7} &   {90.4} &   {85.2} &   {84.2} &   {78.8}\\
CNN-\texttt{m}  &   {15.7} &  \bf {99.2} &   {79.7} &   {96.0} &   {20.3} &   {38.9} &   {41.0} &   {55.8}\\
CNN-\texttt{u}  &   {16.7} &   {62.3} &  \bf {96.6} &   {68.1} &   {22.5} &   {29.4} &   {32.9} &   {46.9}\\
CNN-\unif   &   {75.7} &   {91.3} &   {92.2} &   {91.4} &   {76.9} &   {80.0} &   {80.7} &   {84.0}\\
CNN-\joint&   {90.9} &   {99.1} &   {96.0} &   {98.6} &   {91.3} &   {93.2} &   {93.3} &   {94.6}\\
\gmsa   &   {91.4} &   {98.8} &   {95.6} &   {98.3} &   {91.7} &   {93.5} &   {93.6} &   {94.7}\\
\dmsa\ (ours)   &   {92.3} &   {99.2} &   {96.6} &   \bf {98.8} &   \bf{92.6} &   \bf{94.2} &   \bf{94.3} &   \bf{95.4}\\
\bottomrule
\end{tabular}
\label{table:digits}
\end{table}

\textbf{Adult dataset}. 
We also experimented with the UCI adult dataset \citep{blake1998uci}. It contains $32,561$ training samples with numerical and categorical features, each representing a person.
The task consists of predicting if the person’s income exceeds $50,000$ dollars.
Following \citep{mohri2019agnostic}, we split the dataset into two domains, the doctorate, \texttt{Doc}, domain and non-doctorate, \texttt{NDoc}, domain and used categorical features for training linear classification models. We froze these models and experimented with domain adaptation. Here, we repeatedly sampled $400$ training samples from each domain for training, keeping the test set fixed. 

The results are in Table~\ref{tab:adult}. \dmsa\ achieves higher accuracy compared to \gmsa\ 
on the \texttt{NDoc} domain and also in the average of two domains. The difference in performance is not
statistically significant for the \texttt{Doc} domain as it has very few test samples.
\vspace{-2ex}
\begin{table}[h!]
\centering
\caption{Linear models for adult dataset. The experiments are averaged over $100$ runs.} 
\begin{tabular}{l ccg }
\toprule
Test data & \texttt{\small Doc}  & \texttt{\small NDoc} &  \texttt{\small Doc-NDoc}
 \\
\midrule
\gmsa\   &70.2 $\pm$ 1.2  &76.4 $\pm$ 1.6 & 73.3 $\pm$ 0.8\\   
\dmsa\   &70.0 $\pm$ 0.8  &\bf{80.5 $\pm$ 0.5} & \bf{75.3 $\pm$ 0.4}\\ 
\bottomrule
\end{tabular}
\label{tab:adult}
\end{table}

We finally conducted simulations on a small synthetic dataset 
to illustrate the difference between \gmsa\
and \dmsa. We used the \texttt{\small sklearn} toolkit for these
experiments. Let $\sD_1$ and $\sD_2$ be Gaussian mixtures in one dimensions as 
follows:
$\sD_1 = 0.9 \cdot N(-20, 8) + 0.1  \cdot N(0, 0.1)$ and $\sD_2 = 0.75  \cdot N(3, 0.1) + 0.25 \cdot N(5, 0.1) + 0.05  \cdot N(0, 0.1)$, see Figure~\ref{fig:illustration}.
The two domains are similar around $0$ but are disjoint otherwise. Let the labeling function $f(x) = -1 \text{ if } x \in [-0.5, 0.5] \cup [3.5, \infty)$. The example is designed such that if their estimates are good, then both \gmsa\ and \dmsa\ would 
achieve close to $100\%$ accuracy.  We first sampled $1000$ examples and trained a linear separator $h_k$ for each domain $k$. 
For \gmsa, we trained kernel density estimators and chose the bandwidth based on a five-fold cross-validation. For \dmsa, we trained a conditional Maxent threshold classifier. We first illustrate the kernel density estimate using $1000$ samples in Figure~\ref{fig:illustration}. For $x \in [-0.5, 0.5]$, $\sD_1(x) > \sD_2(x)$, but the kernel density estimates satisfy $\h \sD_2(x) \geq \h \sD_1(x)$, which shows the limitations of kernel density estimation with a single bandwidth. On the other hand, \dmsa\ selected a threshold around $0.3$ for distinguishing between $\sD_1$ and $\sD_2$ and achieves accuracy around $100\%$. We varied the number of examples available for domain adaptation and
compared \gmsa\ and \dmsa. For simplicity we found the best $z$
using exhaustive search for both \gmsa\ and \dmsa.  The results show
that \dmsa\ consistently outperforms \gmsa\ on both the domains
and hence on all convex combinations, 
 see Figure~\ref{fig:artificial}. The results also show that \dmsa\ converges quickly in accordance with our theory.

\begin{figure}[t]
    \centering
    \includegraphics[scale=0.4]{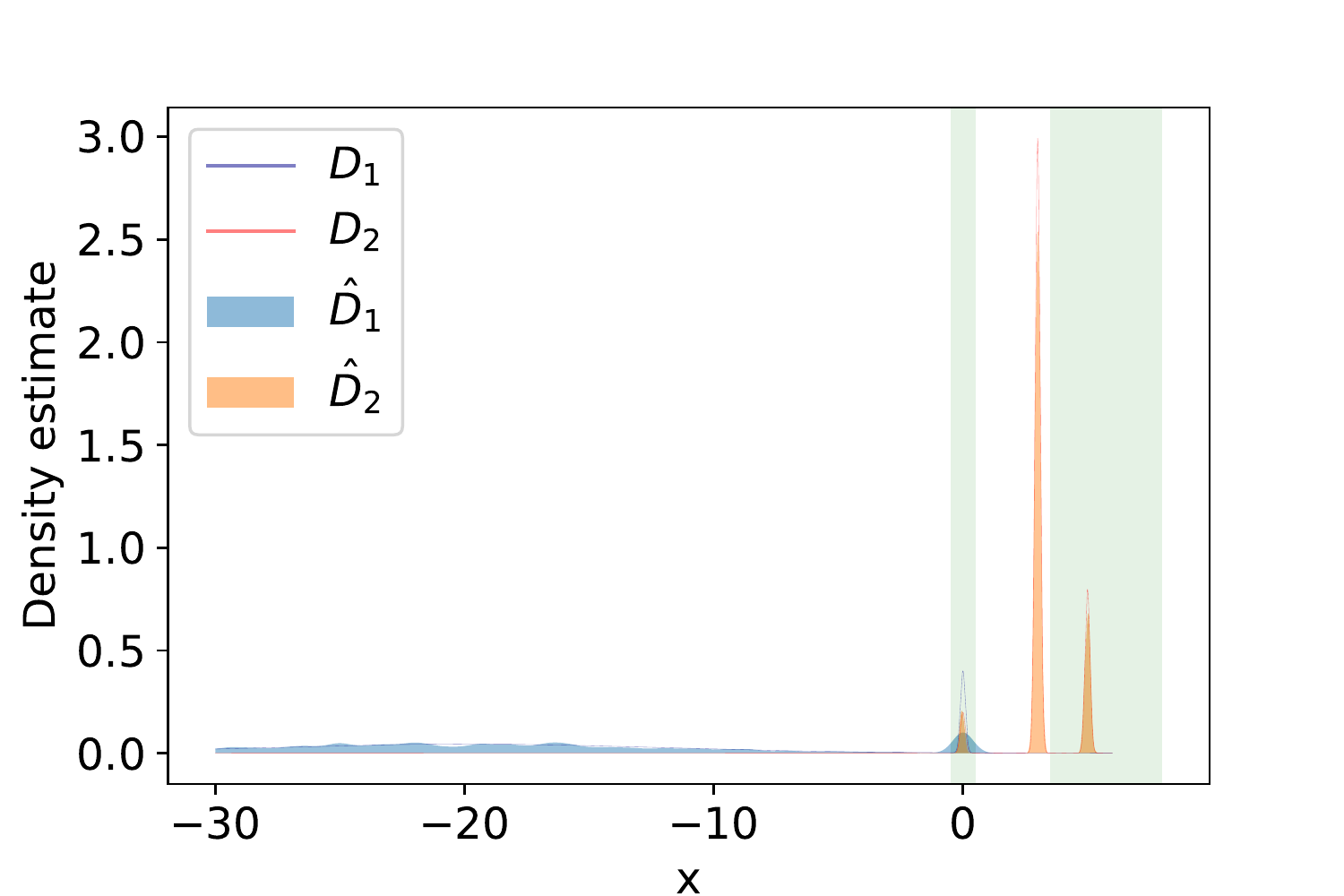}
     \includegraphics[scale=0.4]{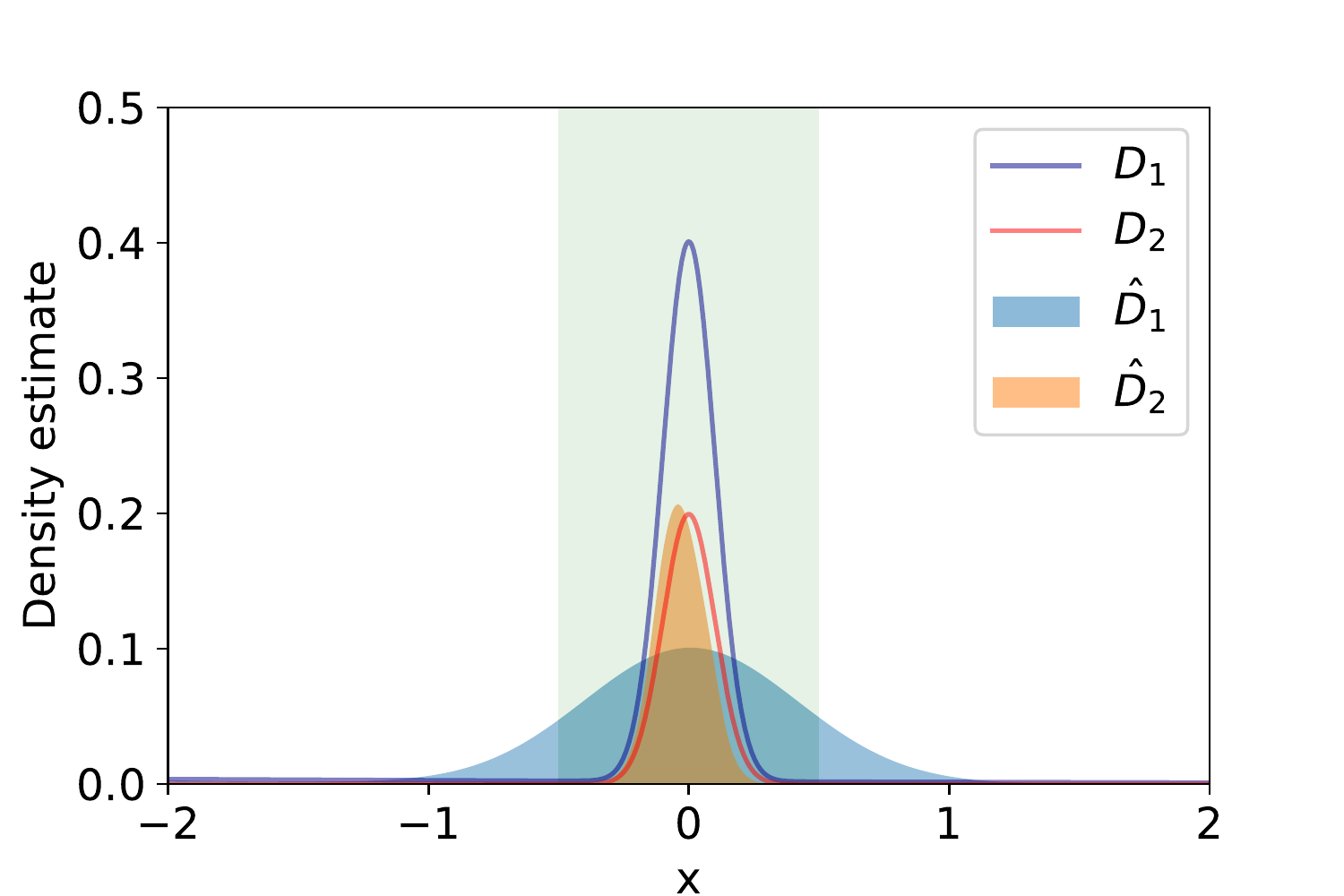}
    \vskip -.2in
    \caption{Left: Illustration of the true densities and kernel density estimates for \gmsa\ for domains $\sD_1$ and $\sD_2$
    with $1000$ samples.  
    The labeling function $f(x) = -1$ in the green regions and 
    $1$ otherwise. Right: Same estimates zoomed in at $x=0$.
    }
    \label{fig:illustration}
    \vskip -.1in
\end{figure}

\begin{figure}[t]
    \centering
    \includegraphics[scale=0.4]{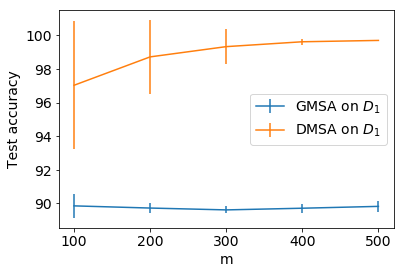}
     \includegraphics[scale=0.4]{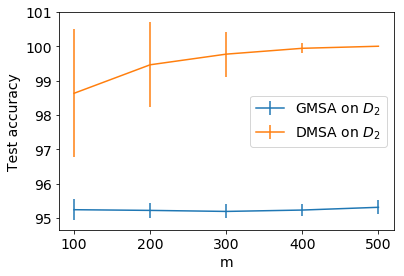}
\vskip -0.2in
    \caption{Comparison of \gmsa\ and \dmsa\ on the synthetic
      dataset. \dmsa\ performs better than \gmsa\ on both the domains
      and hence on any convex combination. The experiments are
      averaged over $10$ runs. The error bars show one standard
      deviation.}
    \label{fig:artificial}
\vskip -.1in
\end{figure}

\vspace{-2ex}
\section{Conclusion}

We presented a new algorithm for the important problem of
multiple-source adaptation, which commonly arises in applications. Our
algorithm was shown to benefit from favorable theoretical guarantees
and a superior empirical performance, compared to previous work.
Moreover, our algorithm is practical: it is straightforward to train a
multi-class classifier in the setting we described and our
DC-programming solution is very efficient. 

Providing a robust solution for the problem is particularly important
for under-represented groups, whose data is not necessarily
well-represented in the classifiers to be combined and trained on
source data. Our solution demonstrates improved performance even in
the cases where the target distribution is not included in the source
distributions. We hope that continued efforts in this area will result
in more equitable treatment of under-represented groups.

\bibliography{mad}

\end{document}